\def\eqref#1{equation~\ref{#1}}
\def\1{\bm{1}}
\def\rvk{{\mathbf{k}}}
\def\rvw{{\mathbf{w}}}
\def\rvx{{\mathbf{x}}}
\def\rvy{{\mathbf{y}}}
\def\rmA{{\mathbf{A}}}
\def\rmD{{\mathbf{D}}}
\def\rmI{{\mathbf{I}}}
\def\rmL{{\mathbf{L}}}
\def\rmT{{\mathbf{T}}}
\def\rmU{{\mathbf{U}}}
\def\rmW{{\mathbf{W}}}
\def\rmX{{\mathbf{X}}}
\def\rmlambda{{\mathbf{\Lambda}}}
\DeclareMathAlphabet{\mathsfit}{\encodingdefault}{\sfdefault}{m}{sl}
\SetMathAlphabet{\mathsfit}{bold}{\encodingdefault}{\sfdefault}{bx}{n}
\def\gE{{\mathcal{E}}}
\def\gG{{\mathcal{G}}}
\def\gL{{\mathcal{L}}}
\def\gN{{\mathcal{N}}}
\def\gV{{\mathcal{V}}}
\def\sR{{\mathbb{R}}}
\newcommand{\R}{\mathbb{R}}
\newcommand{\xbold}{\mathbf{x}}
\newcommand{\Xbold}{\mathbf{X}}
\newcommand{\Cbold}{\mathbf{C}}
\DeclareMathOperator*{\argmin}{arg\,min}
\theoremstyle{plain}
\newtheorem{theorem}{Theorem}[section]
\newtheorem{prop}[theorem]{Proposition}
\newtheorem{lemma}[theorem]{Lemma}
\newtheorem{claim}[theorem]{Claim}
\theoremstyle{definition}
\newtheorem{definition}[theorem]{Definition}
\theoremstyle{remark}
\newtheorem{remark}[theorem]{Remark}
\newcommand{\yy}[1]{\textcolor{black}{#1}}
\newcommand{\makecolor}{\cellcolor{orange!20}}
\newcommand{\reb}[1]{\textcolor{black}{#1}}
\newcommand{\sia}{\texttt{+SIA}\xspace}
\newcommand{\ciga}{\texttt{+CIGA}\xspace}
\newcommand{\ssl}{\texttt{+SSL}\xspace}
\newcommand{\ac}{\texttt{+AugCyc}\xspace}
\newcommand{\na}{\texttt{+NA}\xspace}
\newcommand{\ssr}{\texttt{+SSR}\xspace}
\newcommand{\smp}{\texttt{SMP}\xspace}
\newcommand{\rpgin}{\texttt{+RPGNN}\xspace}
\begin{document}

\title{Tackling Size Generalization of Graph Neural Networks on Biological Data from a Spectral Perspective}

\author{Gaotang Li}
\authornote{Work started at UM and was completed during an internship at Dartmouth College.}
\affiliation{
    \institution{University of Illinois Urbana-Champaign}
    \city{Urbana}
    \state{IL}
    \country{USA}
}
\email{gaotang3@illinois.edu}

\author{ Danai Koutra}
\affiliation{%
  \institution{University of Michigan, Ann Arbor}
  \city{Ann Arbor}
  \state{MI}
  \country{USA}
}
\email{dkoutra@umich.edu}

\author{ Yujun Yan}
\affiliation{%
  \institution{Dartmouth College}
  \city{Hanover}
  \state{NH}
  \country{USA}
}
\email{yujun.yan@dartmouth.edu}

\begin{abstract}

We address the key challenge of size-induced distribution shifts in graph neural networks (GNNs) and their impact on the generalization of GNNs to larger graphs. 
Existing literature operates under diverse assumptions about distribution shifts, resulting in varying conclusions about the generalizability of GNNs.
In contrast to prior work, we adopt a data-driven approach to identify and characterize the types of size-induced distribution shifts and explore their impact on GNN performance from a spectral standpoint, a perspective that has been largely underexplored. 

{Leveraging the significant variance in graph sizes in real biological datasets, we analyze biological graphs and find that spectral differences—driven by subgraph patterns (e.g., average cycle length)—strongly correlate with GNN performance on larger, unseen graphs. Based on these insights, we propose three model-agnostic strategies to enhance GNNs' awareness of critical subgraph patterns, identifying size-intensive attention as the most effective approach. Extensive experiments with six GNN architectures and seven model-agnostic strategies across five datasets show that our size-intensive attention strategy significantly improves graph classification on test graphs 2 to 10 times larger than the training graphs, boosting F1 scores by up to 8\% over strong baselines.
}
\end{abstract}

\begin{CCSXML}
<ccs2012>
   <concept>
       <concept_id>10010147.10010257.10010293</concept_id>
       <concept_desc>Computing methodologies~Machine learning approaches</concept_desc>
       <concept_significance>500</concept_significance>
       </concept>
 </ccs2012>
\end{CCSXML}

\ccsdesc[500]{Computing methodologies~Machine learning approaches}


\keywords{Graph Neural Networks, Out of Distribution, Size Generalization}

\maketitle

\vspace{-0.1cm}
\section{Introduction}
\label{sec:intro}
{
Graph neural networks (GNNs) \cite{kipf2016semi,velickovic2018graph, graphsage, zhang2018end, yan2019groupinn, lee2018graph,LiDZKY23_sparsification, liuexploring} have gained popularity in graph classification due to their strong performance. While most GNNs handle graphs of varying sizes, their ability to generalize to larger, unseen graphs (size generalizability) remains underexplored. This is crucial in various domains.
For instance, in graph algorithmic reasoning~\cite{yan2020neural,velivckovicneural}, GNNs need to learn complex algorithms from small examples and generalize that reasoning to larger graphs, as obtaining exact solutions for larger graphs is challenging. In biology, datasets exhibit a wide range of graph sizes, spanning from small molecules to large compounds. 
}

{Most existing literature on the size generalization of GNNs relies on varying assumptions, leading to different conclusions. For example, a theoretical study \cite{levie2021transferability} assumes that graphs of different sizes discretize the same underlying space (e.g., grid graphs from physical simulations), claiming that spectral GNNs exhibit robust transferability across different graph sizes. Several other studies \cite{levie2021transferability, xu2021neural, sanchez2020learning} also support strong generalizability of GNNs across varying sizes. Conversely, a recent study \cite{yehudai2021local} assumes that local graph structure changes with graph size (e.g., social graphs), arguing that GNN performance declines for larger graphs due to variations in degree patterns in multi-hop neighborhoods. Other works \cite{bevilacqua2021size, buffellisizeshiftreg, huang2024enhancing} also align with this conclusion.}

{Unlike previous studies that introduce new assumptions, we adopt a \textit{data-driven} approach to characterize the types of size-induced distribution shifts and assess their impact on GNNs.}
{Harnessing the diversity of publicly available biological datasets and the vast range of graph sizes they contain, 
we explore the size generalizability of GNNs within the task of biological graph classification.} 
In our analysis, we take a spectral perspective, which offers a fresh view on understanding GNNs that has been underexplored in terms of size generalization. Specifically, our spectral analysis identifies connections between 
the spectral
differences induced by varying graph sizes and the differences in subgraph patterns, particularly cycles. We empirically find that
breaking cycles in {all} graphs amplifies the spectral difference between smaller and larger graphs, whereas extending cycle lengths in smaller graphs to align with those in larger graphs reduces this difference. Furthermore, we observe that conventional GNNs struggle to generalize effectively without explicit cycle information, leading to performance degradation on larger graphs. While prior research has identified GNNs' limitations in counting and recognizing cycles~\cite{chen2020can} within in-distribution generalization, {we provide a thorough and systematic exploration} of their impact in \textit{out-of-distribution} scenarios, specifically focusing on size generalizability. 

{To verify our insight that incorporating cycle information enhances GNN size generalization, we introduce three strategies that can be applied to \textit{any} GNN model. Our first strategy, Size-Insensitive Attention (\texttt{SIA}), encodes cycle information into feature representations and integrates it within an attention mechanism to guide learning. Our second strategy employs a self-supervised auxiliary task to enrich node representations with cycle-related information, while our third strategy leverages augmentation to mitigate cycle discrepancies between small and large graphs. Empirical analysis on diverse biological datasets confirms that incorporating cycle information significantly improves GNN generalization to larger graphs, with \texttt{SIA} emerging as the most effective approach, achieving substantial performance gains over state-of-the-art models.}
In sum, our paper makes the following contributions:
\begin{itemize}
    \item \textbf{New Observations.} 
    We {characterize the types of} size-induced distribution shifts {in real biological datasets},  
    and provide insights into patterns that can help GNNs become size-agnostic and generalize better to larger graphs (\S~\ref{sec: analysis}).
    \item \textbf{Spectral Analysis.} 
    {In contrast to prior work, we leverage spectral analysis to deepen our understanding of the size generalizability of GNNs (\S~\ref{sec: analysis})} {and provide a complementary perspective to the typical exploration of the problem}.   
    \item \textbf{Effective Model-agnostic Strategies.} 
    {
{We propose three strategies to make GNNs aware of size-related subgraph patterns (e.g., average cycle length), including a novel size-insensitive attention mechanism (\texttt{SIA}, \S~\ref{sec:method}). Experiments show that \texttt{SIA} improves GNN generalizability on test graphs 2-10$\times$ larger than training graphs, boosting $\text{F}_1$ scores by up to 8\% over strong baselines (\S~\ref{sec:exp}).}
}
\end{itemize}

\section{Notations and Preliminaries}
\label{sec:prelim}
{In this section, we begin by introducing the notations and definitions used throughout the paper. Next, we provide 
an introduction to the fundamentals of GNNs.} 

\vspace{-0.2cm}
\subsection{Notations \& Definitions}

Let $\gG$($\gV$, $\gE)$ be an undirected and unweighted graph with $N$ nodes, where $\gV$ denotes the node set, and $\gE$ denotes the edge set. {The neighborhood of a node $v_i$ is defined as the set of nodes that connect to $v_i$: $\gN_i=\{v_j|(v_j, v_i) \in \gE\}.$
The graph is represented by its adjacency matrix $\rmA\in\mathbb{R}^{N\times N}$, and it  has
a degree matrix $\rmD$, where the $i$th diagonal element $d_i$ corresponds to the degree of node $v_i$. } {We use $\mathbf{I}$ to denote the $N$-dimensional identity matrix.} {We use the term ``spectrum'' to denote the set of eigenvalues of a matrix.}

\vspace{0.1cm}
\noindent \textbf{Cycle basis.} 
{An important concept we use to study cycles is \textit{cycle basis}~\citep{syslo1979cycle}. A cycle basis is defined as the smallest set of cycles where any cycle in the graph can be expressed as a sum of cycles from this basis, similar to the concept of a basis in vector spaces. Here, the summation of cycles is defined as “exclusive OR” of the edges. We represent the cycle basis for a graph as $\mathcal{C}$ and refer to the $j$th cycle in this cycle basis as $\mathcal{C}_j$. The cycle basis can be found using the algorithm CACM 491~\cite{paton1969algorithm}.}

\vspace{-0.2cm}
\subsection{Graph Learning Task}
\label{sec:task}
{Size generalizability naturally comes up in the graph classification task}, where each node $v_i$ is associated with a feature vector $\rvx_i^{(0)}$, and the feature matrix $\rmX^{(0)}$ is constructed by arranging the node feature vectors as rows. When using a GNN for the graph classification task, we further denote the node representation matrix at the $l$-th layer as $\rmX^{(l)}$, and the representation of node $v_i$ as $\rvx_i^{(l)}$.

\vspace{0.1cm}
\noindent \textbf{Supervised Graph Classification.} Each graph $\gG_i$ is associated with a label {$y_{\gG_i}$} sampled from a label set $\hat{\gL}$. Given a subset of labeled graphs (from a label set $\hat{\gL}$), the goal is to learn a mapping $f^{\gG}:(\rmA, \rmX^{(0)})_i \mapsto$ {$y_{\gG_i}$} between each graph $\gG_i$ and its ground truth label {$y_{\gG_i}$} $ \in \hat{\gL}$. The graph classification loss is given by $L$ = $\frac{1}{|\gG_{\text{train}}|}\sum_{\gG_i \in \gG_{\text{train}}}$\texttt{CrossEntropy} ({$\rvx_{\gG_i}, y_{\gG_i}$}), where $\gG_{\text{train}}$ is the training graph set and $\rvx^{\gG_i}$ is the representation of graph $\gG_i$.

\vspace{0.1cm}
\noindent {\textbf{Evaluation of Size Generalizability.} 
Following prior work~\cite{buffellisizeshiftreg, yehudai2021local}, {given a dataset,} we evaluate the size generalizability of GNNs by testing their classification performance on graphs whose sizes are  larger than those in the train set.} 
{We provide more empirical setup details in \Cref{subsec:setup}.} 

\subsection{Graph Neural Networks} 
\label{subsec:gnns}
\vspace{-0.1cm}

GNNs can be designed from either the spatial perspective or the spectral perspective. Despite the difference in the design perspectives, a recent work~\cite{balcilar2021analyzing} has shown that spectral GNNs and spatial GNNs are related and that spectral analysis of GNNs' behavior can provide a complementary point of view to understand GNNs. 

{Most spatial GNNs~\cite{kipf2016semi, xu2018powerful, velickovic2018graph, graphsage} use the message passing framework~\cite{gilmer2017neural}, which consists of three steps: neighborhood propagation, message combination and global pooling.}
Spectral GNNs~\cite{bo2021beyond, defferrard2016convolutional, levie2018cayleynets} utilize the spectral properties of a propagation matrix $\rmT$ to perform the graph classification. The matrix $\rmT$ is usually a function of the adjacency matrix $\rmA$, such as the normalized adjacency matrix $\rmT=(\rmD+\rmI)^{-1/2}(\rmA+\rmI)(\rmD+\rmI)^{-1/2}$, or the normalized graph Laplacian matrix $\hat{\rmL}$. Since we consider an undirected graph with a real and symmetric adjacency matrix, the matrix $\rmT$ is also real and symmetric. Then, the eigendecomposition on the propagation matrix {is defined as} $\rmT=\rmU\rmlambda\rmU^{T}$, where $\rmU$ is an orthogonal matrix whose columns $\rmU_i$ are orthonormal and are the eigenvectors of $\rmT$, and $\rmlambda$ is a matrix whose diagonal elements are the eigenvalues of $\rmT$, {sorted from large to small by their absolute values}. The set of eigenvectors $\{\rmU_i\}$ form the orthonormal basis of $\sR^{n}$. The goal of a spectral GNN is to learn a proper spectral filter: $f(\rmlambda)=c_0\rmI+c_1\rmlambda+\cdots+c_i\rmlambda^{i}+\cdots$, where $c_i$ are the learnable coefficients. The convolution at each layer 
is equivalent to: $\rmX^{(l+1)}=\sigma(\rmU f(\rmlambda)\rmU^{T}\rmX^{(l)}\rmW^{(l)})$, where $\rmW^{(l)}$ is a learnable weight matrix, and $\sigma{(\cdot)}$ is a nonlinear function (e.g., \texttt{ReLU}). {The graph representation is obtained from the node representations at the last convolution layer: $\rvx^{\gG}$ = \texttt{Pooling} ($\{\rvx_i^\text{(Last)}\}$), where the \texttt{Pooling} function is performed on the set of all the node representations, and it can be \texttt{Global\_mean} or \texttt{Global\_max} or other more complex pooling functions~\cite{ying2018hierarchical, knyazev2019understanding}.}
\section{Spectral Analysis {of Size-induced Distribution Shifts}}
\label{sec: analysis}

\begin{figure*}
\begin{minipage}{\textwidth}
  \begin{minipage}[b]{0.6\textwidth}
    \centering
    \includegraphics[width=.98\textwidth]{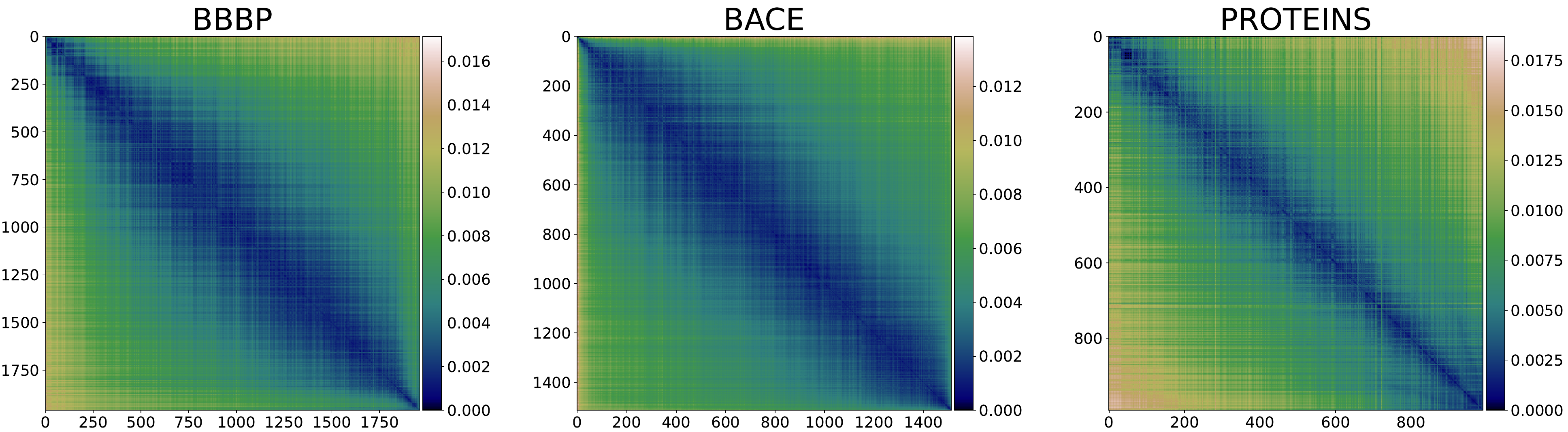}
    \vspace{-0.2cm}
    \captionof{figure}{{[Qualitative Analysis]} Pairwise graph distance of eigenvalue distributions: Graphs are sorted from small to large, and the (i,j)-th pixel in the plot represents the Wasserstein distance between the i-th and j-th graphs. Dark blue represents a small distance (high similarity), while light red represents a large distance (low similarity). We observe that \textbf{eigenvalue distributions show a strong correlation with the graph size}.}
    \label{fig:eigenvalue_dsitribution}
  \end{minipage}
  \hfill
  \begin{minipage}[b]{0.38\textwidth}
    \centering
    \resizebox{\linewidth}{!}{
    \begin{tabular}{@{}lccc@{}}
    \toprule
    & \textbf{Different Size} & \textbf{Similar Size} & \textbf{Relative Diff} \\ \midrule
    \textbf{BBBP}        & 0.00566       & 0.00184       & 208\%       \\
    \textbf{BACE}        & 0.00411       & 0.00149       & 177\%       \\
    \textbf{PROTEINS}    & 0.00765       & 0.00261       & 193\%       \\
    \textbf{NCI1}        & 0.00566       & 0.00215       & 164\%       \\
    \textbf{NCI109}      & 0.00563       & 0.00215       & 162\%       \\ \bottomrule
    \end{tabular}
    }
    \captionof{table}{[Quantitative Analysis] Average Wasserstein distance between graphs of `similar sizes' and graphs of `different sizes' based on  \textbf{eigenvalue} distributions, respectively. The relative difference is computed by the difference of the Wasserstein distance \yy{(col. 1-col. 2)} normalized by the Wasserstein distance of similar graphs.}
    \label{tab:eigen_dist}
    \end{minipage}
  \end{minipage}
\end{figure*}
{In this section, we present our data-driven spectral analysis and findings on subgraph patterns that explain distribution shifts between small and large graphs. 
Specifically, we examine the differences in eigenvalue distributions of the normalized adjacency matrices $\rmT$, which are typically used as the propagation matrices in spectral GNNs (\S~\ref{subsec:gnns}).  First, in \S~\ref{subsec: eigen_bio}, we empirically discover that the eigenvalue distribution of the normalized adjacency matrix depends on the graph size. Then, in  \S~\ref{sec:findings}, we further explore the subgraph patterns responsible for the spectral discrepancies between small and large graphs, revealing two key findings:}

\begin{itemize}[noitemsep,nolistsep,topsep=0pt]
\item Breaking cycles in graphs amplifies the spectral difference between smaller and larger graphs.
\item Extending cycle lengths in smaller graphs to match larger ones reduces the spectral difference.
\end{itemize}
{In \S~\ref{subsec: GNN_spec_gen}, we provide theoretical insights into the relationship between eigenvalue distribution and graph size, elucidating how this dependency may limit the generalization capabilities of GNNs across various graph sizes.}

\subsection{\scalebox{.99}[1.0]{Size-based Spectral Differences in Real Data}}
\label{subsec: eigen_bio}
{{In this subsection, we} investigate how the eigenvalue distribution of the normalized adjacency matrix $\rmT$ varies with the graph size in real-world data. {This property is important as we will show later in \S~\ref{subsec: GNN_spec_gen} that} the spectral discrepancy between small and large graphs affects the size generalizability of GNNs.}

\vspace{0.1cm}
\noindent {\textbf{Datasets.} In our data-driven analysis, we use five pre-processed biological datasets (BBBP, BACE, NCI1, NCI109, and PROTEINS) from the Open Graph Benchmark~\citep{hu2020open} and TuDataset~\citep{Morris+2020}.  
We give details about them  in~\cref{appen:dataset_details}. }

\begin{figure*}[t]
    \centering
    \includegraphics[width=0.75\textwidth]{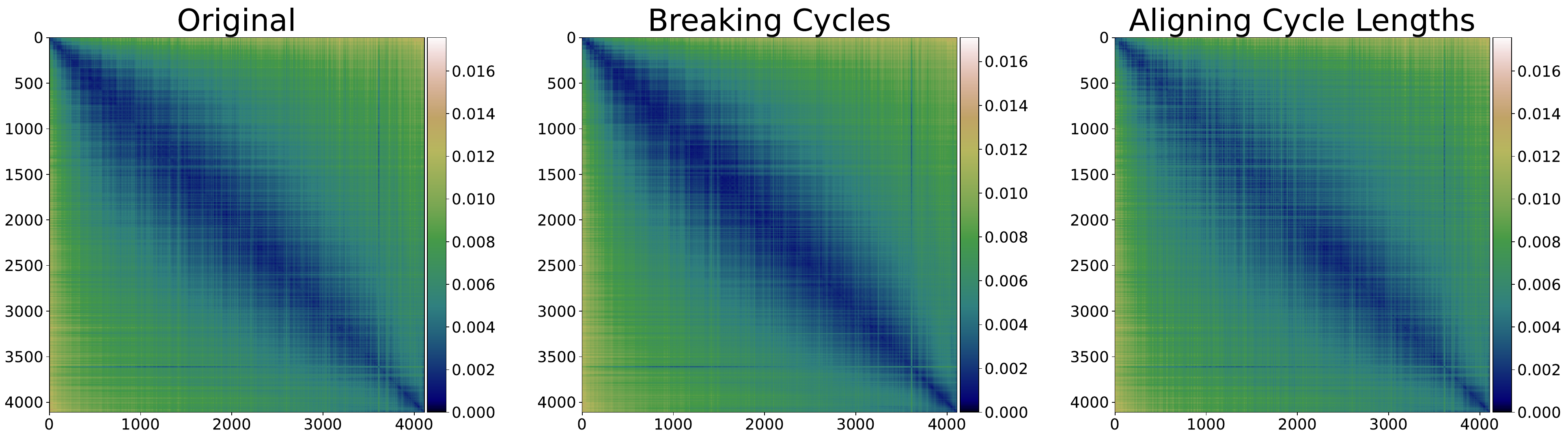}
    \vspace{-0.3cm}
  \caption{{[Qualitative Analysis]} Pairwise graph distance measured by the Wasserstein distance of eigenvalue distributions after breaking cycles and aligning cycle lengths on the NCI109 dataset. Breaking cycles \textbf{amplifies the correlation} between eigenvalue distribution and graph size, while aligning cycle lengths  \textbf{reduces the correlation}. {Quantitative analysis is given in~ \Cref{tab:eigen_difference_break_align}.}}
  \label{fig:cycle_purturbation_eigen}
  \vspace{-0.3cm}
\end{figure*}

\begin{table}[t]
\centering
\caption{{[Quantitative Analysis]} Changes in relative spectral difference after breaking cycles or aligning cycle lengths. {Relative difference is computed as in \Cref{tab:eigen_dist}. We use $\uparrow$ ($\downarrow$) to denote the increase (decrease) in the relative difference compared to not taking the corresponding action.} Breaking cycles results in a larger relative difference, while aligning cycle lengths reduces the relative difference.} 
\label{tab:eigen_difference_break_align}
\small 
\footnotesize
\centering
\begin{tabular}{@{}lc|c@{}}
\toprule
 \multicolumn{1}{l|}{\textbf{Datasets}} & \multicolumn{1}{|c}{\textbf{Breaking cycles}} & \multicolumn{1}{|c}{\textbf{Aligning cycle lengths}} \\ \midrule
\multicolumn{1}{l|}{\textbf{BBBP}}      & $\uparrow$ 50\%                                            & $\downarrow$ 41\%                                            \\
\multicolumn{1}{l|}{\textbf{BACE}}       & $\uparrow$ \;\;6\%                                               & $\downarrow$ 41\%                                            \\
\multicolumn{1}{l|}{\textbf{PROTEINS}}   & $\uparrow$ 31\%                                              & $\downarrow$ 41\%                                            \\ 
\multicolumn{1}{l|}{\textbf{NCI1}}    & $\uparrow$ 53\%                                              & $\downarrow$ 31\%                                            \\
\multicolumn{1}{l|}{\textbf{NCI109}}    & $\uparrow$ 52\%                                               & $\downarrow$ 31\%                                            \\
\bottomrule
\end{tabular}
\vspace{-0.3cm}
\end{table}

\vspace{0.1cm}
\noindent \textbf{Setup.}   We represent the graphs by their empirical distributions of the eigenvalues of the 
normalized adjacency matrix as suggested in~\cite{kipf2016semi}, $\rmT=(\rmD+\rmI)^{-1/2}(\rmA+\rmI)(\rmD+\rmI)^{-1/2}$, and compute their pairwise distances by using the Wasserstein distance~\cite{villani2009optimal}. 
The eigenvalues do not scale with the graph size, and are bounded between [-1,1].  
\Cref{fig:eigenvalue_dsitribution} illustrates the pairwise distances of the graphs arranged in ascending order of size: 
Dark blue represents a small distance (high similarity) while light red represents a large distance (low similarity). We note that using the normalized Laplacian matrix to represent each graph leads to similar observations.

\vspace{0.1cm}
\noindent \textbf{Results.} As can be seen in the three subplots in \cref{fig:eigenvalue_dsitribution}, there is a wide blue band along the diagonal, which indicates that graphs of similar size have more similar eigenvalue distributions than graphs of different sizes. This suggests a strong correlation between the eigenvalue distributions and the graph size. To verify the observation quantitatively, we compute the distance of graphs with `similar size' and graphs of `different sizes' in \Cref{tab:eigen_dist}. 
For each graph, we consider the 20 most `similar graphs' in terms of size, and treat the remaining graphs as graphs of `different sizes'.
The table shows that the Wasserstein distances of eigenvalue distributions between the graphs of different sizes are significantly larger than the distances between graphs of similar size. 
{These empirical results, which highlight the strong dependence of eigenvalue distributions on graph size, are crucial. As we will demonstrate in \S~\ref{subsec: GNN_spec_gen}, this dependency may hinder GNNs from generalizing to larger graph sizes.}

\subsection{Key Findings: Size-based Differences in Subgraph Patterns}
\label{sec:findings}
{Now we {investigate} the subgraph patterns that {can help} explain the spectral differences between small and large graphs.
{Specifically,} we examine how cycle properties differ in small and large graphs and how these differences are revealed in the spectrum. Our analysis aims to answer two questions: (Q1)~How does the existence of cycles in the graphs influence the spectral differences? (Q2)~How do the variations in cycle lengths contribute to differences in the spectrum? To answer these questions, we investigate the properties of the 
cycles in the cycle basis of each graph.}

\subsubsection{(Q1) Existence of Cycles \& Spectrum: The Impact of Breaking Cycles}
\label{subsection: breaking_cycles}
{
To understand how the existence of cycles in the graphs influences the spectral differences,  we break all basis cycles with minimal edge removal while maintaining the same number of disconnected components, according to the details and algorithm given in~\Cref{appen:break_align_cycles}.
We analyze the impact of breaking cycles by assessing the corresponding changes in the spectrum. 
By following the convention of \Cref{fig:eigenvalue_dsitribution}, in the center of \cref{fig:cycle_purturbation_eigen}, we plot the pairwise graph distance based on eigenvalue distributions of {graphs with different} sizes after breaking cycles.
The blue band along the diagonal of the plot becomes darker and narrower, suggesting a larger \yy{relative} spectral difference between small and large graphs and a stronger correlation between the spectrum and graph size. To evaluate the effects quantitatively, we further compute the changes in the {relative spectral difference} and present the results in \Cref{tab:eigen_difference_break_align}. These results indicate that failing to consider cycle information can lead to more significant differences in the spectrum between graphs of varying sizes.}

\subsubsection{(Q2) Cycle Length \& Spectrum: Aligning Cycle Lengths} 

In \S~\ref{subsection: breaking_cycles}, we showed that cycle information is {the key to explain the spectral differences between small and large graphs.}
We now further explore what cycle information reduces {this {relative} difference}, 
{which can in turn improve the size generalization of GNNs.}

To facilitate our exploration, we divide each real-world dataset into two subsets: one subset contains small graphs, and the other subset contains graphs of significantly larger size. {Details regarding this dataset split can be found in~\Cref{appen:dataset_details}.} 
Using this dataset split, we observe a significant difference in the cycle lengths for small and large graphs~(\Cref{appen:break_align_cycles}). 
As described in~\Cref{appen:break_align_cycles}, to reduce that difference, we align the average cycle lengths between small and large graphs by 
randomly inserting redundant nodes to increase the cycle lengths in small graphs. 
{We note that, in \Cref{app: random_nodes}, our approach of aligning the cycle lengths is more effective in reducing the spectral disparities than \textit{randomly} adding the same number of nodes and edges.}

The rightmost heatmap in \cref{fig:cycle_purturbation_eigen} shows how the correlation of eigenvalue distributions and graph size changes after aligning cycle lengths. We observe a lighter blue band along the diagonal, which suggests a weaker correlation between the spectrum and graph size. Going beyond the qualitative analysis, \Cref{tab:eigen_difference_break_align} quantitatively presents the changes in the relative spectral difference between small and large graphs. We observe that aligning cycle lengths results in reduced {relative} disparities in the spectrum between graphs of different sizes. 
Importantly, our observation on cycle length variation differs from the assumptions in~\cite{bevilacqua2021size}. While they assume the density invariance of the induced k-sized subgraphs, we observe the absence of large cycles in small graphs, indicating a lack of density invariance for cycles.

\subsubsection{Interpretation from the Chemical Domain.}
\label{subsec:chemical_interpretation}
While our analysis provides meaningful insights from the datasets, we further explore shifts in cycle statistics from a chemical perspective. Cyclic compounds are fundamental in organic chemistry, often forming larger molecules through repeated unit additions. For example, crown ethers (e.g., 12-Crown-4, 15-Crown-5) and cyclodextrins (e.g., $\alpha$-Cyclodextrin, $\beta$-Cyclodextrin) vary in ring size while maintaining functional linkages, influencing their cation-binding affinities~\citep{gokel2016crown,dodziuk2006cyclodextrins}. In more complex molecules, such as those in our BBBP dataset, cyclic structures affect lipophilicity and molecular flexibility~\citep{pliska1996lipophilicity}. In addition, nodes connecting different rings can be important. For instance, Bicyclo[2.2.0]hexane consists of two fused cyclobutane rings sharing adjacent carbon atoms. The fusion intensifies the ring strain, making the molecule less stable and more reactive compared to unstrained hydrocarbons.
\subsubsection{Cycles \& GNN Performance} 
{So far, we have demonstrated that the spectral differences between small and large graphs are primarily due to variations in their cycle information. Here, we intend to show that these variations in cycle information lead to performance gaps in GNNs when applied to small vs.\ large graphs.}

Towards this end, we perform a controlled experiment to identify the most important nodes for a graph classification task. {Specifically, we first train a model using the small datasets {of BBBP} and then apply GNNExplainer~\cite{ying2019gnnexplainer}, an explainable GNN model, to identify important nodes for label predictions.} {For each graph, we compute the proportion of cycle nodes that are found to be important, finding that 65.6\% of cycle nodes in the small dataset are deemed important for label predictions, while 55.4\% of cycle nodes are found important in the large dataset. This suggests that (1) cycle information is crucial for label prediction, and (2) cycle information in the small datasets is better utilized. These findings provide additional insights into the superior performance of the model on smaller graphs and justify our emphasis on cycle nodes.}

\subsection{{Theoretical Connections}: Graph Spectrum \& GNN Size Generalizability}
\label{subsec: GNN_spec_gen}
{In this subsection, we aim to show how our  findings in real data in \S~\ref{subsec: eigen_bio} and \S~\ref{sec:findings} relate to the generalizability of GNNs. We start by deriving the graph representations learned by spectral GNNs and then analyze the factors that lead to their dependence on graph size. Strong dependence implies poor generalizability of GNNs.}

At a high level, the following proposition asserts that the dependence of graph representations on graph size relies on the dependence of eigenvalue distributions on graph size. 
{Moreover, for GNNs to generalize well to larger graphs, the disparity in the spectra between small and large graphs should be small.}

\vspace{-0.1cm}

{\begin{prop}
\label{prop}
\yy{Assuming that each feature (col. of $\mathbf{X}_i^{(l)}$) is normalized, i.e., $\lVert\mathbf{X}_i^{(l)}\rVert_2/\sqrt{N}=C$ for constant $C$, 
the final node representations learned by spectral GNNs, with $\lambda_i$ denoting the eigenvalues of the propagation matrix $\rmT$, depend on the graph size $N$ as follows:}
\begin{equation}
\footnotesize
\begin{split}
\rmX^{(l+1)} &= \sigma([\sum_{j=1,\cdots,N}{f(\lambda_j)\texttt{COSINE}(\rmU_j, \rmX^{(l)}_1)\cdot C \cdot (\sqrt{N}\cdot\rmU_j)}, \\
&\cdots, \sum_{j=1,\cdots,N}{f(\lambda_j)\texttt{COSINE}(\rmU_j, \rmX^{(l)}_D)\cdot C \cdot (\sqrt{N}\cdot\rmU_j)}]\cdot\rmW^{(l)}),
\end{split}
\end{equation}
\end{prop}
}

\yy{where $f(\cdot)$ is some filter function learned by the GNN.}

\begin{proof}
We give the proof in Appendix~\ref{app:proposition}.
\end{proof}

\begin{remark}
    The final graph representation is obtained through a global pooling (\S~\ref{sec:prelim}), where a set function is applied to each feature dimension. 
    To ensure the graph representation to be irrelevant to the graph size $N$, 
    it suffices to decouple the distribution of eigenvalues ($\lambda_j$) and scaled eigenvectors ($\sqrt{N} \cdot \rmU_j$) from $N$.
    Experiments on real biological graphs (Section~\ref{subsec: eigen_bio} and Appendix~\ref{appen:scaled_eigenvector}) confirm that eigenvalue distribution shifts primarily drive size-dependent variations, while the scaled eigenvectors have less effect. 
    Notably, since eigenvalues of normalized Laplacian and adjacency matrices are bounded ([0,2] and [-1,1], respectively), these shifts are not due to value range~\cite{berlingerio2012scalable}. Our findings also extend to spatial GNNs, given their link to spectral GNNs~\cite{muhammet2020spectral, balcilar2020bridging}. This highlights spectral variations as the key factor in size-related shifts, motivating methods to align the spectra via key subgraph patterns.
\end{remark}

\section{Methodology: Strategies for Enhancing Cycle Awareness in GNNs}
\label{sec:method}
{Our findings in Sec.~\ref{sec: analysis} suggest that GNNs with better ability to identify cycles and generalize over cycle lengths may have better size generalizability on biological graphs. However, recent work~\cite{chen2020can} has found that most GNNs are incapable of learning cycle information. Inspired by these, 
we propose three model-agnostic strategies to help {any }GNN learn the cycle information. {{We note that our objective is not to propose a specific advanced model architecture, but rather to substantiate our findings through model-agnostic designs.} } 
\subsection{Strategy {S1:} Size-insensitive Attention (SIA)}
{One way to incorporate cycle information into GNNs is by encoding it in the features and leveraging them 
within the attention mechanism to guide the learning process. Specifically, for each graph $\mathcal{G}$, we obtain its cycle basis $\mathcal{C}$. Then, for each node $v_i \in \mathcal{G}$, we calculate the average length of the cycle basis to which it belongs:}
\begin{equation}
    \ell_i = \begin{cases}
         \frac{\sum_{j=1}^{|\mathcal{C}|}|\mathcal{C}_j| \cdot \mathbbm{1}_{\{v_i \in \mathcal{C}_j\}}} { \sum_{j=1}^{|\mathcal{C}|} \mathbbm{1}_{\{v_i \in \mathcal{C}_j\}}} &\text{ if } v_i \text{ belongs to some cycles} \\ 
         0 &\text{ otherwise,}
    \end{cases}
\end{equation}
{where $\mathbbm{1}_{\{\text{condition}\}}$ is an indicator function that outputs 1 when the condition is met and 0 otherwise.}  
Then we manually construct a two-dimensional feature vector for each node $v_i$ based on its associated cycle information:
\begin{equation}
    \mathbf{c}_i = [\mathbbm{1}_{\{v_i \in \text{cycle}\}}, \ell_i].
\end{equation}

{We use the structural feature matrix $\mathbf{C} = [\mathbf{c}_1; \ldots; \mathbf{c}_N] \in \R^{N\times2}$ for attention.} 
{Since attention weights often diminish with increasing graph size due to the utilization of \texttt{Softmax}, 
we propose scaling the attention weights by the graph size and employing \texttt{Global\_max} as the global pooling operation to mitigate the impact of graph size. Mathematically, our final graph representation is given by: 
\begin{equation}
    \rvk =  \texttt{Softmax}(\Cbold \rvw^\top_{A}) \cdot N, \; \; \;\xbold^{\mathcal{G}} = \texttt{Global\_max} (  \texttt{Diag}(\rvk) \cdot \Xbold^{(\text{Last})}  )
\end{equation}
where $\rvw^\top_{A}$ is a learnable vector, and $\texttt{Diag}(\cdot)$ creates a diagonal matrix using the vector as its elements.
We note that when we train on small graphs and test on large graphs, some structural features may not be seen in the training, such as certain cycle lengths in the large graphs. We rely on the attention mechanism to generalize to those cases.}

\subsection{Strategy S2: Self-supervised Auxiliary Task}
{Our second proposed strategy utilizes a self-supervised (SSL) auxiliary task to enhance the node representations with cycle-related information. The auxiliary task is to predict whether a node belongs to a cycle. We do not utilize cycle lengths as labels because large test graphs may have cycle lengths not present in the training data. Formally, let $X^{(\text{Last})}$ denote the node representations obtained after the last graph convolution.}
The conventional way of learning is to directly apply a pooling operator and then minimize the loss function for label supervision as below:
\begin{equation}
    \mathcal{L}_{\text{label}} = \texttt{CrossEntropy}(\texttt{Linear}(\texttt{Pooling}(\rmX^{(\text{Last})}), y^{\gG})),
\end{equation}
where $y^{\gG}$ is the ground truth label for the graph. {In this approach, we incorporate an additional loss that aims to diffuse cycle-related information into the node representations through supervision:}
\begin{equation}
    \mathcal{L}_{\text{cycle}} = \texttt{CrossEntropy}(\texttt{MLP}(\rmX^{(\text{Last})}), \rvy_{\text{cycle}}), 
\end{equation}
{where $\rvy_{\text{cycle}}$ is an indicator vector} denoting whether a node belongs to a cycle. {To sum up, the total loss is given by}:
\begin{equation}
    \mathcal{L} = \mathcal{L}_{\text{label}} + \lambda \mathcal{L}_{\text{cycle}}, 
\end{equation}
{where $\lambda$ is a hyperparameter tuned via cross-validation.}

\subsection{Strategy S3: Augmentation}
\vspace{-0.1cm}
{Our third strategy, \ac, aims to
reduce the discrepancy between small and large graphs through direct augmentation for the small training graphs. We augment the training graphs by extending the cycle lengths such that the average cycle length and standard deviation align with those in large graphs. To achieve this augmentation, we use the algorithm detailed in~\Cref{appen:break_align_cycles}. {In short, our algorithm selectively increases the cycle length of a subset of small graphs.} 
{The newly added nodes share} the same features from the nodes with the lowest degrees in the same cycle. 
Last, we feed the augmented training graphs to GNNs for graph classification.
}

\section{Experiments}
\label{sec:exp}
\vspace{-0.1cm}

\newcommand{\githubrepo}{\url{https://anonymous.4open.science/r/SizeGen}}

In this section, we conduct extensive experiments to evaluate our proposed strategies. \yy{Starting with real datasets, we aim to answer the following research questions:}

 \noindent  $\bullet$ (\textbf{RQ1}) {Is cycle information helpful for size generalization?} Do our strategies incorporating cycle information effectively enhance the size generalizability of GNNs?  
 
 \noindent  $\bullet$  (\textbf{RQ2}) {How {do our strategies compare to other baseline strategies?} 
 
  \noindent  $\bullet$ (\textbf{RQ3}) {How effective is \texttt{SIA} on large datasets?} 
  
  \noindent  $\bullet$  (\textbf{RQ4})~How effective is it on synthetic datasets that simulate real-world molecular graphs with varying cycle distributions.}\footnote{The code is available at \url{https://github.com/GaotangLi/SizeGenBio}.} 

\subsection{Experimental Setup}
\label{subsec:setup}
\textbf{Real Datasets.} {We use the biological datasets in \Cref{subsec: eigen_bio}.}

\vspace{0.15cm}
\noindent\yy{\textbf{Synthetic Datasets.} As discussed in~\Cref{subsec:chemical_interpretation}, cyclic structures are fundamental in chemical molecules. We consider a binary classification problem: the first category consists of cycles and connections without shared bonds, while the second includes fused cycles (two or more cycles sharing bonds) with additional connections. Each graph is connected, and all features are assigned as i.i.d. nine-dimensional Gaussian vectors, aligning with the feature dimensionality of BBBP. Our dataset reflects real-world molecular structures, where cycle fusion significantly alters molecular properties. Fused rings affect physical characteristics, chemical reactivity, and biological activity. For example, benzene ($C_6H_6$) is a highly stable six-membered aromatic ring, whereas naphthalene ($C_{10}H_{8}$), composed of two fused benzene rings, exhibits increased reactivity in electrophilic substitution and higher melting and boiling points. Similarly, while cyclohexane ($C_{6}H_{12}$) is chemically inert, decalin ($C_{10}H_{18}$), formed by fusing two cyclohexane rings, becomes significantly more rigid.}

\vspace{0.15cm}
\noindent \textbf{Data Preprocessing and Important Training Details.} In order to analyze size generalizability, we {generate} four splits for each dataset: train, validation, small\_test, and large\_test, where large\_test contains graphs with significantly larger sizes. We generate the splits as follows. First, we sort the samples in the dataset by their size. Next, we take the train, validation, and small\_test split from the 50\% smallest graphs in the dataset. An intuitive way of getting the large\_split is to take the top-k largest graphs. However, doing so would result in severe label skewness (class imbalances) between the small\_test and the large\_test. 
To avoid such a severe label shift, we select the same number of graphs per class as in the small\_test subset, starting from the largest graph within each class. {This way guarantees that the label distribution between small\_test and large\_test is the same, while ensuring that the graph size in the latter is 2-10 times larger}. Nevertheless, the smallest 50\% samples still have significant class imbalance. To address this issue, we use upsampling during training throughout the experiments, and we use F1 as the metric to measure the model performance. 
{More details about data preprocessing, hyperparameters, and training can be found in \cref{appen:data_preprocessing} and \cref{appen:training_details}.}  

\vspace{0.15cm}
\noindent \textbf{Baselines.} {We consider a total of 43 baselines: a strong expressive model, \smp~\cite{vignac2020building}, which excels at the cycle detection task, and the combination of 6 backbone models with 7 model-agnostic strategies. Specifically, the}
baseline models are: Multilayer Perceptron (MLP), GCN~\citep{kipf2016semi}, GAT~\citep{velickovic2018graph}, GIN~\cite{xu2018powerful}, FAGCN~\cite{bo2021beyond}, and GNNML3~\cite{balcilar2021breaking}. We integrate seven model-agnostic strategies with these GNN backbones: {our three proposed strategies, (1)~\sia, size-insensitive attention method that leverages the attention mechanism to incorporate cycle information (S1); (2)~\ssl, our self-supervised based strategy that encourages the node representation to contain cycle information (S2); (3)~\ac, our augmentation-based strategy that aligns cycle lengths (S3); as well as other existing model-agnostic strategies: (4)} 
\reb{(4)~SizeShiftReg~\cite{buffellisizeshiftreg} (\ssr), a regularization based on the idea of
simulating a shift in the size of the training graphs using coarsening techniques;
(5)~RPGNN~\cite{murphy2019relational} (\rpgin), an expressive model for arbitrary-sized graphs; and (6)\texttt{+CIGAv1} \& (7)\texttt{+CIGAv2}~\cite{chen2022learning}, two versions of a causal model that effectively handles out-of-distribution problems on graphs.
}   

\begin{table*}[t]
\caption{Size generalizability evaluated by the graph classification performance on small and large test graphs. The performance is reported by the average F1 scores and its standard deviation. 
The rightmost column denotes the average improvements compared with the original performance using the same backbone model across five different datasets. The largest average improvement within the same model and small/large category is highlighted in orange. \textbf{All strategies enhance GNNs’ size generalizability,
with \sia drop as the most effective
method.}}
\label{tab:cycle_aware_table}
\resizebox{\linewidth}{!}{ 
\begin{tabular}{@{}lllllllllllll@{}}
\toprule
\multicolumn{1}{c}{\textbf{Datasets}}                & \multicolumn{2}{c}{\textbf{BBBP}}                    & \multicolumn{2}{c}{\textbf{BACE}}                    & \multicolumn{2}{c}{\textbf{PROTEINS}}                & \multicolumn{2}{c}{\textbf{NCI1}}                    & \multicolumn{2}{c}{\textbf{NCI109}} & \multicolumn{2}{c}{\textbf{Avg Improv}} \\ \midrule
\multicolumn{1}{l|}{\textbf{Models}}                 & \textbf{Small} & \multicolumn{1}{l|}{\textbf{Large}} & \textbf{Small} & \multicolumn{1}{l|}{\textbf{Large}} & \textbf{Small} & \multicolumn{1}{l|}{\textbf{Large}} & \textbf{Small} & \multicolumn{1}{l|}{\textbf{Large}} & \textbf{Small}     & \multicolumn{1}{l|}{\textbf{Large}} & \textbf{Small}        & \textbf{Large}         \\ \midrule
\multicolumn{1}{l|}{\textbf{MLP}}                    & 90.36\tiny\(\pm\)0.71     & \multicolumn{1}{l|}{55.61\tiny\(\pm\)3.37}     & 61.06\tiny\(\pm\)5.79     & \multicolumn{1}{l|}{21.06\tiny\(\pm\)7.89}     & 36.15\tiny\(\pm\)2.28     & \multicolumn{1}{l|}{21.55\tiny\(\pm\)1.34}     & 36.43\tiny\(\pm\)3.89     & \multicolumn{1}{l|}{3.36\tiny\(\pm\)2.87}      & 35.87\tiny\(\pm\)4.23         & \multicolumn{1}{l|}{4.65\tiny\(\pm\)3.72} & - & - \\
\multicolumn{1}{l|}{\textbf{MLP+SSL}}                & 90.90\tiny\(\pm\)1.76      & \multicolumn{1}{l|}{62.56\tiny\(\pm\)5.48}     & 58.57\tiny\(\pm\)8.85     & \multicolumn{1}{l|}{23.01\tiny\(\pm\)11.95}    & 35.00\tiny\(\pm\)2.8       & \multicolumn{1}{l|}{20.88\tiny\(\pm\)1.64}     & 34.71\tiny\(\pm\)1.33     & \multicolumn{1}{l|}{2.86\tiny\(\pm\)0.78}      & 37.29\tiny\(\pm\)4.69          & \multicolumn{1}{l|}{6.34\tiny\(\pm\)4.78}  & -0.68 & +1.88  \\
\multicolumn{1}{l|}{\textbf{MLP+AugCyc}}       & 90.72\tiny\(\pm\)2.70     & \multicolumn{1}{l|}{57.86\tiny\(\pm\)4.74}     & 59.88\tiny\(\pm\)7.33     & \multicolumn{1}{l|}{26.50\tiny\(\pm\)14.97}     & 37.29\tiny\(\pm\)0.0      & \multicolumn{1}{l|}{22.22\tiny\(\pm\)0.0}      & 36.98\tiny\(\pm\)2.29     & \multicolumn{1}{l|}{2.64\tiny\(\pm\)2.1}       & 40.59\tiny\(\pm\)3.86         & \multicolumn{1}{l|}{8.41\tiny\(\pm\)3.71}   & +1.12 & +2.28  \\
\multicolumn{1}{l|}{\textbf{MLP+SIA}}                & 90.38\tiny\(\pm\)1.05     & \multicolumn{1}{l|}{62.79\tiny\(\pm\)7.55}     & 60.85\tiny\(\pm\)7.83     & \multicolumn{1}{l|}{21.79\tiny\(\pm\)15.07}      & 40.68\tiny\(\pm\)3.56     & \multicolumn{1}{l|}{33.57\tiny\(\pm\)11.87}    & 35.42\tiny\(\pm\)3.83     & \multicolumn{1}{l|}{3.26\tiny\(\pm\)2.55}      & 39.20\tiny\(\pm\)2.96          & \multicolumn{1}{l|}{12.2\tiny\(\pm\)5.05}    & \makecolor+1.33  & \makecolor+5.48 \\ \midrule
\multicolumn{1}{l|}{\textbf{GCN}}                    & 91.37\tiny\(\pm\)0.59     & \multicolumn{1}{l|}{68.59\tiny\(\pm\)7.47}     & 63.68\tiny\(\pm\)6.63     & \multicolumn{1}{l|}{28.72\tiny\(\pm\)14.26}     & 72.35\tiny\(\pm\)2.58     & \multicolumn{1}{l|}{40.57\tiny\(\pm\)7.67}     & 54.91\tiny\(\pm\)2.37     & \multicolumn{1}{l|}{28.80\tiny\(\pm\)7.57}     & 60.83\tiny\(\pm\)1.92         & \multicolumn{1}{l|}{30.45\tiny\(\pm\)4.34}   & - & - \\
\multicolumn{1}{l|}{\textbf{GCN+SSL}}                & 92.66\tiny\(\pm\)1.21     & \multicolumn{1}{l|}{73.24\tiny\(\pm\)5.71}     & 64.92\tiny\(\pm\)4.44     & \multicolumn{1}{l|}{32.84\tiny\(\pm\)16.08}    & 72.46\tiny\(\pm\)1.58     & \multicolumn{1}{l|}{41.21\tiny\(\pm\)6.66}     & 57.43\tiny\(\pm\)3.23     & \multicolumn{1}{l|}{32.58\tiny\(\pm\)10.08}    & 60.50\tiny\(\pm\)3.09         & \multicolumn{1}{l|}{27.35\tiny\(\pm\)11.42} & +0.97 & +2.01  \\
\multicolumn{1}{l|}{\textbf{GCN+AugCyc}}       & 91.41\tiny\(\pm\)1.33     & \multicolumn{1}{l|}{68.08\tiny\(\pm\)7.65}     & 63.83\tiny\(\pm\)5.44     & \multicolumn{1}{l|}{35.65\tiny\(\pm\)7.70}     & 72.87\tiny\(\pm\)3.68     & \multicolumn{1}{l|}{54.73\tiny\(\pm\)8.24}     & 53.85\tiny\(\pm\)3.71     & \multicolumn{1}{l|}{27.39\tiny\(\pm\)8.33}     & 62.78\tiny\(\pm\)2.98         & \multicolumn{1}{l|}{33.62\tiny\(\pm\)3.58}  & +0.32 & +4.47  \\
\multicolumn{1}{l|}{\textbf{GCN+SIA}}                & 91.32\tiny\(\pm\)0.73     & \multicolumn{1}{l|}{71.66\tiny\(\pm\)6.99}     & 64.35\tiny\(\pm\)9.76     & \multicolumn{1}{l|}{24.24\tiny\(\pm\)17.03}     & 73.84\tiny\(\pm\)3.65     & \multicolumn{1}{l|}{58.74\tiny\(\pm\)9.49}     & 59.78\tiny\(\pm\)1.65     & \multicolumn{1}{l|}{45.70\tiny\(\pm\)6.70}     & 60.32\tiny\(\pm\)2.90         & \multicolumn{1}{l|}{38.78\tiny\(\pm\)4.55} & \makecolor+1.29 & \makecolor+8.40   \\ \midrule
\multicolumn{1}{l|}{\textbf{GAT}}                    & 91.27\tiny\(\pm\)1.43     & \multicolumn{1}{l|}{68.35\tiny\(\pm\)7.02}     & 69.73\tiny\(\pm\)2.05     & \multicolumn{1}{l|}{42.23\tiny\(\pm\)11.18}    & 72.25\tiny\(\pm\)4.25    & \multicolumn{1}{l|}{43.86\tiny\(\pm\)6.82}    & 58.22\tiny\(\pm\)2.86     & \multicolumn{1}{l|}{49.36\tiny\(\pm\)4.12}     & 64.39\tiny\(\pm\)3.29         & \multicolumn{1}{l|}{38.36\tiny\(\pm\)8.93}   &  - & - \\
\multicolumn{1}{l|}{\textbf{GAT+SSL}}                & 91.65\tiny\(\pm\)0.92     & \multicolumn{1}{l|}{74.24\tiny\(\pm\)7.34}     & 71.20\tiny\(\pm\)2.04     & \multicolumn{1}{l|}{40.88\tiny\(\pm\)10.81}    & 74.20\tiny\(\pm\)1.46     & \multicolumn{1}{l|}{49.30\tiny\(\pm\)5.56}      & 59.47\tiny\(\pm\)2.89     & \multicolumn{1}{l|}{51.85\tiny\(\pm\)4.03}     & 66.79\tiny\(\pm\)3.56         & \multicolumn{1}{l|}{42.20\tiny\(\pm\)6.71}  & \makecolor+1.49 & +3.26  \\
\multicolumn{1}{l|}{\textbf{GAT+AugCyc}}       & 92.41\tiny\(\pm\)1.29     & \multicolumn{1}{l|}{69.57\tiny\(\pm\)2.89}     & 68.39\tiny\(\pm\)6.06     & \multicolumn{1}{l|}{40.73\tiny\(\pm\)13.4}     & 74.99\tiny\(\pm\)1.89     & \multicolumn{1}{l|}{59.80\tiny\(\pm\)7.27}     & 56.23\tiny\(\pm\)3.85     & \multicolumn{1}{l|}{49.37\tiny\(\pm\)7.52}     & 64.07\tiny\(\pm\)3.46         & \multicolumn{1}{l|}{45.25\tiny\(\pm\)9.19}  & +0.05 & +4.51 \\
\multicolumn{1}{l|}{\textbf{GAT+SIA}}                & 91.88\tiny\(\pm\)2.12     & \multicolumn{1}{l|}{74.87\tiny\(\pm\)5.62}     & 69.64\tiny\(\pm\)6.79     & \multicolumn{1}{l|}{43.87\tiny\(\pm\)7.98}     & 75.35\tiny\(\pm\)3.28     & \multicolumn{1}{l|}{62.71\tiny\(\pm\)4.98}     & 61.42\tiny\(\pm\)1.07     & \multicolumn{1}{l|}{55.73\tiny\(\pm\)12.98}    & 63.27\tiny\(\pm\)3.15         & \multicolumn{1}{l|}{45.97\tiny\(\pm\)7.74}  & +1.14 & \makecolor+8.20  \\ \midrule
\multicolumn{1}{l|}{\textbf{GIN}}                    & 88.28\tiny\(\pm\)2.39     & \multicolumn{1}{l|}{66.67\tiny\(\pm\)5.55}     & 57.02\tiny\(\pm\)6.48     & \multicolumn{1}{l|}{22.97\tiny\(\pm\)10.26}     & 74.55\tiny\(\pm\)4.27     & \multicolumn{1}{l|}{50.20\tiny\(\pm\)5.36}     & 62.17\tiny\(\pm\)3.86     & \multicolumn{1}{l|}{44.26\tiny\(\pm\)7.03}     & 62.42\tiny\(\pm\)2.77         & \multicolumn{1}{l|}{33.23\tiny\(\pm\)6.77}  & - & -  \\
\multicolumn{1}{l|}{\textbf{GIN+SSL}}                & 91.13\tiny\(\pm\)1.32     & \multicolumn{1}{l|}{68.67\tiny\(\pm\)9.75}     & 56.46\tiny\(\pm\)8.59     & \multicolumn{1}{l|}{23.91\tiny\(\pm\)10.64}    & 75.47\tiny\(\pm\)1.15     & \multicolumn{1}{l|}{48.14\tiny\(\pm\)4.00}     & 61.18\tiny\(\pm\)3.53     & \multicolumn{1}{l|}{46.47\tiny\(\pm\)9.86}     & 63.11\tiny\(\pm\)4.05         & \multicolumn{1}{l|}{35.0\tiny\(\pm\)11.43} & +0.58 &  +0.97  \\
\multicolumn{1}{l|}{\textbf{GIN+AugCyc}}       & 92.56\tiny\(\pm\)1.17     & \multicolumn{1}{l|}{77.69\tiny\(\pm\)5.63}     & 58.30\tiny\(\pm\)5.29     & \multicolumn{1}{l|}{23.89\tiny\(\pm\)13.17}    & 74.56\tiny\(\pm\)2.92     & \multicolumn{1}{l|}{51.02\tiny\(\pm\)8.42}     & 62.70\tiny\(\pm\)0.94     & \multicolumn{1}{l|}{46.76\tiny\(\pm\)5.34}     & 64.56\tiny\(\pm\)5.45         & \multicolumn{1}{l|}{37.16\tiny\(\pm\)5.86}   & +1.65 & \makecolor+3.84 \\
\multicolumn{1}{l|}{\textbf{GIN+SIA}}                & 92.70\tiny\(\pm\)0.45     & \multicolumn{1}{l|}{75.99\tiny\(\pm\)4.74}     & 61.30\tiny\(\pm\)6.77     & \multicolumn{1}{l|}{24.42\tiny\(\pm\)16.37}    & 74.88\tiny\(\pm\)4.24     & \multicolumn{1}{l|}{51.36\tiny\(\pm\)7.76}     & 62.83\tiny\(\pm\)1.07     & \multicolumn{1}{l|}{42.82\tiny\(\pm\)8.92}     & 63.00\tiny\(\pm\)4.24          & \multicolumn{1}{l|}{41.65\tiny\(\pm\)4.19}  & \makecolor+2.05  & +3.78  \\ \midrule
\multicolumn{1}{l|}{\textbf{FAGCN}}                  & 90.58\tiny\(\pm\)1.72     & \multicolumn{1}{l|}{64.93\tiny\(\pm\)7.62}     & 62.96\tiny\(\pm\)2.12     & \multicolumn{1}{l|}{24.65\tiny\(\pm\)11.71}    & 70.03\tiny\(\pm\)5.20      & \multicolumn{1}{l|}{42.34\tiny\(\pm\)6.61}     & 43.51\tiny\(\pm\)4.29     & \multicolumn{1}{l|}{10.16\tiny\(\pm\)7.80}     & 55.78\tiny\(\pm\)3.5          & \multicolumn{1}{l|}{22.65\tiny\(\pm\)12.87}  & - & - \\
\multicolumn{1}{l|}{\textbf{FAGCN+SSL}}              & 91.55\tiny\(\pm\)2.51     & \multicolumn{1}{l|}{67.56\tiny\(\pm\)5.48}     & 64.67\tiny\(\pm\)3.88     & \multicolumn{1}{l|}{35.46\tiny\(\pm\)16.52}    & 66.97\tiny\(\pm\)1.75     & \multicolumn{1}{l|}{48.06\tiny\(\pm\)8.33}     & 46.42\tiny\(\pm\)6.08     & \multicolumn{1}{l|}{12.11\tiny\(\pm\)5.39}      & 56.04\tiny\(\pm\)4.29         & \multicolumn{1}{l|}{23.99\tiny\(\pm\)10.57}  & \makecolor+0.56 & \makecolor+4.49   \\
\multicolumn{1}{l|}{\textbf{FAGCN+AugCyc}}     & 91.30\tiny\(\pm\)2.26     & \multicolumn{1}{l|}{71.44\tiny\(\pm\)6.45}     & 57.68\tiny\(\pm\)3.38     & \multicolumn{1}{l|}{26.41\tiny\(\pm\)23.39}    & 68.85\tiny\(\pm\)16.12    & \multicolumn{1}{l|}{44.39\tiny\(\pm\)16.89}    & 39.48\tiny\(\pm\)4.99     & \multicolumn{1}{l|}{10.98\tiny\(\pm\)5.45}      & 55.30\tiny\(\pm\)3.46         & \multicolumn{1}{l|}{24.59\tiny\(\pm\)9.19}  & -2.05 & +2.62 \\
\multicolumn{1}{l|}{\textbf{FAGCN+SIA}}              & 90.17\tiny\(\pm\)2.83     & \multicolumn{1}{l|}{74.65\tiny\(\pm\)9.13}     & 62.40\tiny\(\pm\)3.36     & \multicolumn{1}{l|}{30.35\tiny\(\pm\)13.48}    & 71.30\tiny\(\pm\)5.79     & \multicolumn{1}{l|}{48.94\tiny\(\pm\)10.62}    & 46.95\tiny\(\pm\)5.71     & \multicolumn{1}{l|}{10.99\tiny\(\pm\)7.50}       & 52.82\tiny\(\pm\)6.28         & \multicolumn{1}{l|}{19.08\tiny\(\pm\)5.32} & +0.16 & +3.86   \\ \midrule
\multicolumn{1}{l|}{\textbf{GNNML3}}                 & 92.01\tiny\(\pm\)1.56     & \multicolumn{1}{l|}{64.18\tiny\(\pm\)6.99}     & 62.31\tiny\(\pm\)4.90     & \multicolumn{1}{l|}{32.94\tiny\(\pm\)12.86}    & 71.59\tiny\(\pm\)3.5      & \multicolumn{1}{l|}{40.74\tiny\(\pm\)15.0}     & 63.73\tiny\(\pm\)4.67     & \multicolumn{1}{l|}{51.75\tiny\(\pm\)9.05}     & 59.39\tiny\(\pm\)3.76         & \multicolumn{1}{l|}{33.80\tiny\(\pm\)11.19} & - & -  \\
\multicolumn{1}{l|}{\textbf{GNNML3+SSL}}             & 92.96\tiny\(\pm\)1.54     & \multicolumn{1}{l|}{64.18\tiny\(\pm\)8.62}     & 65.65\tiny\(\pm\)5.69     & \multicolumn{1}{l|}{31.78\tiny\(\pm\)12.72}     & 74.41\tiny\(\pm\)3.21     & \multicolumn{1}{l|}{56.81\tiny\(\pm\)3.49}     & 63.91\tiny\(\pm\)3.34     & \multicolumn{1}{l|}{48.84\tiny\(\pm\)10.07}    & 61.01\tiny\(\pm\)2.44         & \multicolumn{1}{l|}{35.13\tiny\(\pm\)9.49}   & \makecolor+1.78 & +2.67 \\
\multicolumn{1}{l|}{\textbf{GNNML3+AugCyc}}    & 91.38\tiny\(\pm\)2.92     & \multicolumn{1}{l|}{69.82\tiny\(\pm\)5.51}     & 63.36\tiny\(\pm\)2.78     & \multicolumn{1}{l|}{32.59\tiny\(\pm\)10.32}    & 70.54\tiny\(\pm\)5.00     & \multicolumn{1}{l|}{38.79\tiny\(\pm\)5.21}     & 62.30\tiny\(\pm\)3.27     & \multicolumn{1}{l|}{55.57\tiny\(\pm\)11.73}    & 58.18\tiny\(\pm\)3.17         & \multicolumn{1}{l|}{41.30\tiny\(\pm\)14.75}   & -0.65 & +2.93\\
\multicolumn{1}{l|}{\textbf{GNNML3+SIA}}             & 92.70\tiny\(\pm\)0.81     & \multicolumn{1}{l|}{70.43\tiny\(\pm\)6.36}     & 64.57\tiny\(\pm\)2.72     & \multicolumn{1}{l|}{37.73\tiny\(\pm\)7.68}     & 69.32\tiny\(\pm\)3.79     & \multicolumn{1}{l|}{48.94\tiny\(\pm\)10.62}    & 63.91\tiny\(\pm\)5.81     & \multicolumn{1}{l|}{48.85\tiny\(\pm\)12.11}    & 61.58\tiny\(\pm\)3.98         & \multicolumn{1}{l|}{49.70\tiny\(\pm\)17.85}  & +0.61 & \makecolor+6.45  \\ \bottomrule
\end{tabular}
}

\vspace{-0.2cm}
\end{table*}

\begin{table*}[ht]
\caption{Size generalizability evaluated with other baselines, following the same rule as in~\Cref{tab:cycle_aware_table}. \sia consistently and significantly outperforms other strategies regarding size generalizability.}
\label{tab:baseline_table}
\resizebox{0.95\linewidth}{!}{ 
\begin{tabular}{@{}lllllllllllll@{}}
\toprule
\multicolumn{1}{c}{\textbf{Datasets}}                & \multicolumn{2}{c}{\textbf{BBBP}}                    & \multicolumn{2}{c}{\textbf{BACE}}                    & \multicolumn{2}{c}{\textbf{PROTEINS}}                & \multicolumn{2}{c}{\textbf{NCI1}}                    & \multicolumn{2}{c}{\textbf{NCI109}} & \multicolumn{2}{c}{\textbf{Avg Improv}} \\ \midrule
\multicolumn{1}{l|}{\textbf{Models}}                 & \textbf{Small} & \multicolumn{1}{l|}{\textbf{Large}} & \textbf{Small} & \multicolumn{1}{l|}{\textbf{Large}} & \textbf{Small} & \multicolumn{1}{l|}{\textbf{Large}} & \textbf{Small} & \multicolumn{1}{l|}{\textbf{Large}} & \textbf{Small}     & \multicolumn{1}{l|}{\textbf{Large}} & \textbf{Small}        & \textbf{Large}         \\ \midrule
\multicolumn{1}{l|}{\textbf{MLP}}                    & 90.36\tiny\(\pm\)0.71     & \multicolumn{1}{l|}{55.61\tiny\(\pm\)3.37}     & 61.06\tiny\(\pm\)5.79     & \multicolumn{1}{l|}{21.06\tiny\(\pm\)7.89}     & 36.15\tiny\(\pm\)2.28     & \multicolumn{1}{l|}{21.55\tiny\(\pm\)1.34}     & 36.43\tiny\(\pm\)3.89     & \multicolumn{1}{l|}{3.36\tiny\(\pm\)2.87}      & 35.87\tiny\(\pm\)4.23         & \multicolumn{1}{l|}{4.65\tiny\(\pm\)3.72} & - & - \\

\multicolumn{1}{l|}{\reb{\textbf{MLP+RPGNN}}}                    & \reb{90.44\tiny\(\pm\)1.03}     & \multicolumn{1}{l|}{\reb{55.10\tiny\(\pm\)7.64}}     & \reb{57.80\tiny\(\pm\)8.53}    & \multicolumn{1}{l|}{\reb{21.26\tiny\(\pm\)8.20}}     & \reb{45.60\tiny\(\pm\)2.69}     & \multicolumn{1}{l|}{\reb{20.71\tiny\(\pm\)1.41}}     & \reb{35.34\tiny\(\pm\)4.35}     & \multicolumn{1}{l|}{\reb{13.45\tiny\(\pm\)25.12}}      & \reb{38.60\tiny\(\pm\)2.37}         & \multicolumn{1}{l|}{\reb{10.44\tiny\(\pm\)18.77}} & \reb{\makecolor+1.58} & \reb{+2.95} \\

\multicolumn{1}{l|}{\reb{\textbf{MLP+SSR}}}    & \reb{91.07\tiny\(\pm\)0.67}     & \multicolumn{1}{l|}{\reb{57.02\tiny\(\pm\)9.04}}     & \reb{60.35\tiny\(\pm\)5.36}     & \multicolumn{1}{l|}{\reb{25.42\tiny\(\pm\)4.02}}    & \reb{37.15\tiny\(\pm\)2.48}     & \multicolumn{1}{l|}{\reb{22.77\tiny\(\pm\)4.28}}     & \reb{34.42\tiny\(\pm\)1.89}     & \multicolumn{1}{l|}{\reb{1.27\tiny\(\pm\)0.63}}    & \reb{38.68\tiny\(\pm\)4.70}         & \multicolumn{1}{l|}{\reb{6.96\tiny\(\pm\)4.65}}   & \reb{+0.36} & \reb{+1.44} \\ 

\multicolumn{1}{l|}{\reb{\textbf{MLP+CIGAv1}}}                    & \reb{88.15\tiny\(\pm\)1.05}     & \multicolumn{1}{l|}{\reb{54.14\tiny\(\pm\)6.07}}     & \reb{58.18\tiny\(\pm\)10.81}    & \multicolumn{1}{l|}{\reb{24.99\tiny\(\pm\)14.41}}     & \reb{34.68\tiny\(\pm\)4.23}     & \multicolumn{1}{l|}{\reb{18.23\tiny\(\pm\)2.50}}     & \reb{33.52\tiny\(\pm\)7.85}     & \multicolumn{1}{l|}{\reb{8.93\tiny\(\pm\)7.48}}      & \reb{32.79\tiny\(\pm\)1.20}         & \multicolumn{1}{l|}{\reb{3.19\tiny\(\pm\)0.01}} & \reb{-2.51} & \reb{+0.65} \\

\multicolumn{1}{l|}{\reb{\textbf{MLP+CIGAv2}}}                    & \reb{88.08\tiny\(\pm\)3.84}     & \multicolumn{1}{l|}{\reb{61.70\tiny\(\pm\)13.15}}     & \reb{57.46\tiny\(\pm\)4.71}    & \multicolumn{1}{l|}{\reb{18.43\tiny\(\pm\)16.73}}     & \reb{38.28\tiny\(\pm\)4.23}     & \multicolumn{1}{l|}{\reb{24.87\tiny\(\pm\)5.62}}     & \reb{35.77\tiny\(\pm\)7.55}     & \multicolumn{1}{l|}{\reb{4.22\tiny\(\pm\)6.89}}      & \reb{38.45\tiny\(\pm\)3.68}         & \multicolumn{1}{l|}{\reb{4.55\tiny\(\pm\)1.28}} & \reb{-0.37} & \reb{+1.51} \\

\multicolumn{1}{l|}{\textbf{MLP+SIA}}                & 90.38\tiny\(\pm\)1.05     & \multicolumn{1}{l|}{62.79\tiny\(\pm\)7.55}     & 60.85\tiny\(\pm\)7.83     & \multicolumn{1}{l|}{21.79\tiny\(\pm\)15.07}      & 40.68\tiny\(\pm\)3.56     & \multicolumn{1}{l|}{33.57\tiny\(\pm\)11.87}    & 35.42\tiny\(\pm\)3.83     & \multicolumn{1}{l|}{3.26\tiny\(\pm\)2.55}      & 39.20\tiny\(\pm\)2.96          & \multicolumn{1}{l|}{12.2\tiny\(\pm\)5.05}    & +1.33  & \makecolor+5.48 \\ \midrule
\multicolumn{1}{l|}{\textbf{GCN}}                    & 91.37\tiny\(\pm\)0.59     & \multicolumn{1}{l|}{68.59\tiny\(\pm\)7.47}     & 63.68\tiny\(\pm\)6.63     & \multicolumn{1}{l|}{28.72\tiny\(\pm\)14.26}     & 72.35\tiny\(\pm\)2.58     & \multicolumn{1}{l|}{40.57\tiny\(\pm\)7.67}     & 54.91\tiny\(\pm\)2.37     & \multicolumn{1}{l|}{28.80\tiny\(\pm\)7.57}     & 60.83\tiny\(\pm\)1.92         & \multicolumn{1}{l|}{30.45\tiny\(\pm\)4.34}   & - & - \\

\multicolumn{1}{l|}{\reb{\textbf{GCN+RPGNN}}}                    & \reb{92.27\tiny\(\pm\)0.81}     & \multicolumn{1}{l|}{\reb{68.69\tiny\(\pm\)6.58}}     & \reb{63.70\tiny\(\pm\)1.06}    & \multicolumn{1}{l|}{\reb{33.86\tiny\(\pm\)13.91}}     & \reb{74.74\tiny\(\pm\)3.75}     & \multicolumn{1}{l|}{\reb{24.61\tiny\(\pm\)10.08}}     & \reb{58.88\tiny\(\pm\)2.03}     & \multicolumn{1}{l|}{\reb{34.68\tiny\(\pm\)10.77}}      & \reb{63.10\tiny\(\pm\)1.86}         & \multicolumn{1}{l|}{\reb{39.69\tiny\(\pm\)5,88}} & \reb{+1.91} & \reb{+0.88} \\ 

\multicolumn{1}{l|}{\reb{\textbf{GCN+SSR}}}    & \reb{91.19\tiny\(\pm\)1.14}     & \multicolumn{1}{l|}{\reb{68.15\tiny\(\pm\) 6.38}}     & \reb{66.01\tiny\(\pm\) 2.51}     & \multicolumn{1}{l|}{\reb{31.64\tiny\(\pm\)9.96}}    & \reb{73.51\tiny\(\pm\) 2.91}     & \multicolumn{1}{l|}{\reb{43.33\tiny\(\pm\)5.19}}     & \reb{59.60\tiny\(\pm\)2.62}     & \multicolumn{1}{l|}{\reb{35.01\tiny\(\pm\)7.13}}    & \reb{59.78\tiny\(\pm\)2.71}         & \multicolumn{1}{l|}{\reb{33.11\tiny\(\pm\)6.44}}   & \reb{+1.39} & \reb{+2.82} \\ 

\multicolumn{1}{l|}{\reb{\textbf{GCN+CIGAv1}}}                    & \reb{90.55\tiny\(\pm\)1.32}     & \multicolumn{1}{l|}{\reb{66.55\tiny\(\pm\)5.80}}     & \reb{66.66\tiny\(\pm\)5.72}    & \multicolumn{1}{l|}{\reb{28.51\tiny\(\pm\)8.64}}     & \reb{72.64\tiny\(\pm\)1.81}     & \multicolumn{1}{l|}{\reb{54.67\tiny\(\pm\)6.08}}     & \reb{58.52\tiny\(\pm\)4.88}     & \multicolumn{1}{l|}{\reb{40.82\tiny\(\pm\)11.14}}      & \reb{59.09\tiny\(\pm\)3.50}         & \multicolumn{1}{l|}{\reb{25.82\tiny\(\pm\)7.81}} & \reb{+0.86} & \reb{+3.85} \\

\multicolumn{1}{l|}{\reb{\textbf{GCN+CIGAv2}}}                    & \reb{89.45\tiny\(\pm\)3.60}     & \multicolumn{1}{l|}{\reb{69.71\tiny\(\pm\)8.20}}     & \reb{65.02\tiny\(\pm\)1.80}    & \multicolumn{1}{l|}{\reb{35.42\tiny\(\pm\)12.36}}     & \reb{72.15\tiny\(\pm\)3.86}     & \multicolumn{1}{l|}{\reb{60.12\tiny\(\pm\)6.84}}     & \reb{57.89\tiny\(\pm\)3.74}     & \multicolumn{1}{l|}{\reb{35.42\tiny\(\pm\)10.75}}      & \reb{58.12\tiny\(\pm\)5.37}         & \multicolumn{1}{l|}{\reb{28.51\tiny\(\pm\)10.10}} & \reb{-0.10} & \reb{+6.41} \\

\multicolumn{1}{l|}{\textbf{GCN+SIA}}                & 91.32\tiny\(\pm\)0.73     & \multicolumn{1}{l|}{71.66\tiny\(\pm\)6.99}     & 64.35\tiny\(\pm\)9.76     & \multicolumn{1}{l|}{24.24\tiny\(\pm\)17.03}     & 73.84\tiny\(\pm\)3.65     & \multicolumn{1}{l|}{58.74\tiny\(\pm\)9.49}     & 59.78\tiny\(\pm\)1.65     & \multicolumn{1}{l|}{45.70\tiny\(\pm\)6.70}     & 60.32\tiny\(\pm\)2.90         & \multicolumn{1}{l|}{38.78\tiny\(\pm\)4.55} & \makecolor+1.29 & \makecolor+8.40   \\ \midrule
\multicolumn{1}{l|}{\textbf{GAT}}                    & 91.27\tiny\(\pm\)1.43     & \multicolumn{1}{l|}{68.35\tiny\(\pm\)7.02}     & 69.73\tiny\(\pm\)2.05     & \multicolumn{1}{l|}{42.23\tiny\(\pm\)11.18}    & 72.25\tiny\(\pm\)4.25    & \multicolumn{1}{l|}{43.86\tiny\(\pm\)6.82}    & 58.22\tiny\(\pm\)2.86     & \multicolumn{1}{l|}{49.36\tiny\(\pm\)4.12}     & 64.39\tiny\(\pm\)3.29         & \multicolumn{1}{l|}{38.36\tiny\(\pm\)8.93}   &  - & - \\

\multicolumn{1}{l|}{\reb{\textbf{GAT+RPGNN}}}                    & \reb{91.76\tiny\(\pm\)1.69}     & \multicolumn{1}{l|}{\reb{65.85\tiny\(\pm\)5.37}}     & \reb{69.97\tiny\(\pm\)2.17}    & \multicolumn{1}{l|}{\reb{39.27\tiny\(\pm\)13.50}}     & \reb{72.89\tiny\(\pm\)3.35}     & \multicolumn{1}{l|}{\reb{38.49\tiny\(\pm\)6.14}}     & \reb{59.31\tiny\(\pm\)5.51}     & \multicolumn{1}{l|}{\reb{58.18\tiny\(\pm\)5.76}}      & \reb{65.52\tiny\(\pm\)1.94}         & \multicolumn{1}{l|}{\reb{44.15\tiny\(\pm\)5.76}} & \reb{-0.75} & \reb{+0.76} \\ 

\multicolumn{1}{l|}{\reb{\textbf{GAT+SSR}}}    & \reb{91.98\tiny\(\pm\)0.66}     & \multicolumn{1}{l|}{\reb{74.83\tiny\(\pm\)4.35}}     & \reb{66.03\tiny\(\pm\)3.83}     & \multicolumn{1}{l|}{\reb{41.41\tiny\(\pm\)11.8}}    & \reb{74.72\tiny\(\pm\)3.51}     & \multicolumn{1}{l|}{\reb{44.81\tiny\(\pm\)8.59}}     & \reb{60.68\tiny\(\pm\)1.95}     & \multicolumn{1}{l|}{\reb{49.64\tiny\(\pm\)5.26}}    & \reb{66.73\tiny\(\pm\)1.65}         & \multicolumn{1}{l|}{\reb{41.14\tiny\(\pm\)4.41}}   & \reb{+0.86} & \reb{+1.93} \\ 

\multicolumn{1}{l|}{\reb{\textbf{GAT+CIGAv1}}}                    & \reb{89.53\tiny\(\pm\)1.51}     & \multicolumn{1}{l|}{\reb{67.35\tiny\(\pm\)8.74}}     & \reb{67.18\tiny\(\pm\)5.12}    & \multicolumn{1}{l|}{\reb{39.88\tiny\(\pm\)18.64}}     & \reb{73.28\tiny\(\pm\)4.87}     & \multicolumn{1}{l|}{\reb{48.56\tiny\(\pm\)5.42}}     & \reb{59.52\tiny\(\pm\)3.27}     & \multicolumn{1}{l|}{\reb{54.35\tiny\(\pm\)11.85}}      & \reb{66.82\tiny\(\pm\)2.03}         & \multicolumn{1}{l|}{\reb{50.62\tiny\(\pm\)4.85}} & \reb{+0.09} & \reb{+3.72} \\

\multicolumn{1}{l|}{\reb{\textbf{GAT+CIGAv2}}}                    & \reb{90.92\tiny\(\pm\)1.43}     & \multicolumn{1}{l|}{\reb{72.08\tiny\(\pm\)7.09}}     & \reb{66.57\tiny\(\pm\)4.91}    & \multicolumn{1}{l|}{\reb{40.93\tiny\(\pm\)19.35}}     & \reb{74.68\tiny\(\pm\)7.54}     & \multicolumn{1}{l|}{\reb{60.88\tiny\(\pm\)3.48}}     & \reb{56.88\tiny\(\pm\)24.40}     & \multicolumn{1}{l|}{\reb{54.28\tiny\(\pm\)22.63}}      & \reb{66.78\tiny\(\pm\)3.20}         & \multicolumn{1}{l|}{\reb{52.62\tiny\(\pm\)7.98}} & \reb{-0.01} & \reb{+7.73} \\

\multicolumn{1}{l|}{\textbf{GAT+SIA}}                & 91.88\tiny\(\pm\)2.12     & \multicolumn{1}{l|}{74.87\tiny\(\pm\)5.62}     & 69.64\tiny\(\pm\)6.79     & \multicolumn{1}{l|}{43.87\tiny\(\pm\)7.98}     & 75.35\tiny\(\pm\)3.28     & \multicolumn{1}{l|}{62.71\tiny\(\pm\)4.98}     & 61.42\tiny\(\pm\)1.07     & \multicolumn{1}{l|}{55.73\tiny\(\pm\)12.98}    & 63.27\tiny\(\pm\)3.15         & \multicolumn{1}{l|}{45.97\tiny\(\pm\)7.74}  & \makecolor+1.14 & \makecolor+8.20  \\ \midrule
\multicolumn{1}{l|}{\textbf{GIN}}                    & 88.28\tiny\(\pm\)2.39     & \multicolumn{1}{l|}{66.67\tiny\(\pm\)5.55}     & 57.02\tiny\(\pm\)6.48     & \multicolumn{1}{l|}{22.97\tiny\(\pm\)10.26}     & 74.55\tiny\(\pm\)4.27     & \multicolumn{1}{l|}{50.20\tiny\(\pm\)5.36}     & 62.17\tiny\(\pm\)3.86     & \multicolumn{1}{l|}{44.26\tiny\(\pm\)7.03}     & 62.42\tiny\(\pm\)2.77         & \multicolumn{1}{l|}{33.23\tiny\(\pm\)6.77}  & - & -  \\

\multicolumn{1}{l|}{\reb{\textbf{GIN+RPGNN}}}                    & \reb{89.59\tiny\(\pm\)1.33}     & \multicolumn{1}{l|}{\reb{69.23\tiny\(\pm\)8.05}}     & \reb{57.23\tiny\(\pm\)7.07}    & \multicolumn{1}{l|}{\reb{16.28\tiny\(\pm\)9.31}}     & \reb{71.63\tiny\(\pm\)5.85}     & \multicolumn{1}{l|}{\reb{45.11\tiny\(\pm\)15.98}}     & \reb{61.59\tiny\(\pm\)4.12}     & \multicolumn{1}{l|}{\reb{48.86\tiny\(\pm\)5.88}}      & \reb{62.27\tiny\(\pm\)2.33}         & \multicolumn{1}{l|}{\reb{44.04\tiny\(\pm\)7.06}} & \reb{-0.43} & \reb{+1.24} \\

\multicolumn{1}{l|}{\reb{\textbf{GIN+SSR}}}    & \reb{89.00\tiny\(\pm\)1.77}     & \multicolumn{1}{l|}{\reb{68.84\tiny\(\pm\)6.01}}     & \reb{59.83\tiny\(\pm\)2.34}     & \multicolumn{1}{l|}{\reb{21.28\tiny\(\pm\)19.26}}    & \reb{72.46\tiny\(\pm\)2.86}     & \multicolumn{1}{l|}{\reb{55.46\tiny\(\pm\)15.95}}     & \reb{62.54\tiny\(\pm\)1.30}     & \multicolumn{1}{l|}{\reb{48.73\tiny\(\pm\)8.62}}    & \reb{61.05\tiny\(\pm\)3.37}         & \multicolumn{1}{l|}{\reb{35.63 \tiny\(\pm\)7.29}}   & \reb{+0.09} & \reb{+2.52} \\

\multicolumn{1}{l|}{\reb{\textbf{GIN+CIGAv1}}}                    & \reb{91.01\tiny\(\pm\)1.59}     & \multicolumn{1}{l|}{\reb{74.85\tiny\(\pm\)10.41}}     & \reb{60.58\tiny\(\pm\)7.15}    & \multicolumn{1}{l|}{\reb{23.78\tiny\(\pm\)17.58}}     & \reb{75.32\tiny\(\pm\)16.25}     & \multicolumn{1}{l|}{\reb{54.83\tiny\(\pm\)9.87}}     & \reb{61.94\tiny\(\pm\)1.08}     & \multicolumn{1}{l|}{\reb{45.85\tiny\(\pm\)5.83}}      & \reb{61.55\tiny\(\pm\)12.55}         & \multicolumn{1}{l|}{\reb{35.88\tiny\(\pm\)7.77}} & \reb{+1.19} & \reb{+3.57} \\

\multicolumn{1}{l|}{\reb{\textbf{GIN+CIGAv2}}}                    & \reb{90.66\tiny\(\pm\)1.72}     & \multicolumn{1}{l|}{\reb{75.80\tiny\(\pm\)14.30}}     & \reb{63.02\tiny\(\pm\)1.80}    & \multicolumn{1}{l|}{\reb{22.42\tiny\(\pm\)12.36}}     & \reb{73.25\tiny\(\pm\)5.42}     & \multicolumn{1}{l|}{\reb{53.35\tiny\(\pm\)8.76}}     & \reb{64.42\tiny\(\pm\)5.35}     & \multicolumn{1}{l|}{\reb{45.37\tiny\(\pm\)5.12}}      & \reb{59.52\tiny\(\pm\)4.68}         & \multicolumn{1}{l|}{\reb{38.42\tiny\(\pm\)5.68}} & \reb{+1.29} & \reb{+3.61} \\

\multicolumn{1}{l|}{\textbf{GIN+SIA}}                & 92.70\tiny\(\pm\)0.45     & \multicolumn{1}{l|}{75.99\tiny\(\pm\)4.74}     & 61.30\tiny\(\pm\)6.77     & \multicolumn{1}{l|}{24.42\tiny\(\pm\)16.37}    & 74.88\tiny\(\pm\)4.24     & \multicolumn{1}{l|}{51.36\tiny\(\pm\)7.76}     & 62.83\tiny\(\pm\)1.07     & \multicolumn{1}{l|}{42.82\tiny\(\pm\)8.92}     & 63.00\tiny\(\pm\)4.24          & \multicolumn{1}{l|}{41.65\tiny\(\pm\)4.19}  & \makecolor+2.05  & \makecolor+3.78  \\ \midrule
\multicolumn{1}{l|}{\textbf{FAGCN}}                  & 90.58\tiny\(\pm\)1.72     & \multicolumn{1}{l|}{64.93\tiny\(\pm\)7.62}     & 62.96\tiny\(\pm\)2.12     & \multicolumn{1}{l|}{24.65\tiny\(\pm\)11.71}    & 70.03\tiny\(\pm\)5.20      & \multicolumn{1}{l|}{42.34\tiny\(\pm\)6.61}     & 43.51\tiny\(\pm\)4.29     & \multicolumn{1}{l|}{10.16\tiny\(\pm\)7.80}     & 55.78\tiny\(\pm\)3.5          & \multicolumn{1}{l|}{22.65\tiny\(\pm\)12.87}  & - & - \\

\multicolumn{1}{l|}{\reb{\textbf{FAGCN+RPGNN}}}                    & \reb{90.43\tiny\(\pm\)2.58}     & \multicolumn{1}{l|}{\reb{69.58\tiny\(\pm\)11.71}}     & \reb{61.00\tiny\(\pm\)51.91}    & \multicolumn{1}{l|}{\reb{20.94\tiny\(\pm\)12.62}}     & \reb{68.71\tiny\(\pm\)3.58}     & \multicolumn{1}{l|}{\reb{43.58\tiny\(\pm\)12.21}}     & \reb{44.55\tiny\(\pm\)5.82}     & \multicolumn{1}{l|}{\reb{12.22\tiny\(\pm\)5.95}}      & \reb{57.03\tiny\(\pm\)1.08}         & \multicolumn{1}{l|}{\reb{21.86\tiny\(\pm\)13.32}} & \reb{-0.23} & \reb{+0.69} \\

\multicolumn{1}{l|}{\reb{\textbf{FAGCN+SSR}}}    & \reb{88.95\tiny\(\pm\)2.16}     & \multicolumn{1}{l|}{\reb{70.12\tiny\(\pm\)9.49}}     & \reb{64.12\tiny\(\pm\)2.57}     & \multicolumn{1}{l|}{\reb{22.37\tiny\(\pm\)7.80}}    & \reb{67.92\tiny\(\pm\)3.00}     & \multicolumn{1}{l|}{\reb{42.27\tiny\(\pm\)8.69}}     & \reb{47.47\tiny\(\pm\)2.77}     & \multicolumn{1}{l|}{\reb{14.69\tiny\(\pm\)8.22}}    & \reb{55.66\tiny\(\pm\)3.37}         & \multicolumn{1}{l|}{\reb{22.35\tiny\(\pm\)9.35}}   & \reb{+0.25} & \reb{+1.41} \\

\multicolumn{1}{l|}{\reb{\textbf{FAGCN+CIGAv1}}}                    & \reb{91.08\tiny\(\pm\)1.48}     & \multicolumn{1}{l|}{\reb{69.29\tiny\(\pm\)3.35}}     & \reb{62.66\tiny\(\pm\)5.72}    & \multicolumn{1}{l|}{\reb{28.51\tiny\(\pm\)8.64}}     & \reb{68.58\tiny\(\pm\)6.19}     & \multicolumn{1}{l|}{\reb{57.79\tiny\(\pm\)10.45}}     & \reb{40.93\tiny\(\pm\)10.69}     & \multicolumn{1}{l|}{\reb{9.45\tiny\(\pm\)11.16}}      & \reb{48.07\tiny\(\pm\)5.70}         & \multicolumn{1}{l|}{\reb{15.11\tiny\(\pm\)5.82}} & \reb{+1.19} & \reb{+3.57} \\

\multicolumn{1}{l|}{\reb{\textbf{FAGCN+CIGAv2}}}                    & \reb{92.08\tiny\(\pm\)1.70}     & \multicolumn{1}{l|}{\reb{68.37\tiny\(\pm\)9.67}}     & \reb{60.37\tiny\(\pm\)4.65}    & \multicolumn{1}{l|}{\reb{22.29\tiny\(\pm\)15.00}}     & \reb{69.45\tiny\(\pm\)4.97}     & \multicolumn{1}{l|}{\reb{60.28\tiny\(\pm\)12.24}}     & \reb{44.27\tiny\(\pm\)6.74}     & \multicolumn{1}{l|}{\reb{15.88\tiny\(\pm\)10.25}}      & \reb{50.25\tiny\(\pm\)4.44}         & \multicolumn{1}{l|}{\reb{16.22\tiny\(\pm\)7.74}} & \reb{-1.29} & \reb{+3.66} \\

\multicolumn{1}{l|}{\textbf{FAGCN+SIA}}              & 90.17\tiny\(\pm\)2.83     & \multicolumn{1}{l|}{74.65\tiny\(\pm\)9.13}     & 62.40\tiny\(\pm\)3.36     & \multicolumn{1}{l|}{30.35\tiny\(\pm\)13.48}    & 71.30\tiny\(\pm\)5.79     & \multicolumn{1}{l|}{48.94\tiny\(\pm\)10.62}    & 46.95\tiny\(\pm\)5.71     & \multicolumn{1}{l|}{10.99\tiny\(\pm\)7.50}       & 52.82\tiny\(\pm\)6.28         & \multicolumn{1}{l|}{19.08\tiny\(\pm\)5.32} & \makecolor+0.16 & \makecolor+3.86   \\ \midrule
\multicolumn{1}{l|}{\textbf{GNNML3}}                 & 92.01\tiny\(\pm\)1.56     & \multicolumn{1}{l|}{64.18\tiny\(\pm\)6.99}     & 62.31\tiny\(\pm\)4.90     & \multicolumn{1}{l|}{32.94\tiny\(\pm\)12.86}    & 71.59\tiny\(\pm\)3.5      & \multicolumn{1}{l|}{40.74\tiny\(\pm\)15.0}     & 63.73\tiny\(\pm\)4.67     & \multicolumn{1}{l|}{51.75\tiny\(\pm\)9.05}     & 59.39\tiny\(\pm\)3.76         & \multicolumn{1}{l|}{33.80\tiny\(\pm\)11.19} & - & -  \\

\multicolumn{1}{l|}{\reb{\textbf{GNNML3+RPGNN}}}                    & \reb{92.57\tiny\(\pm\)1.45}     & \multicolumn{1}{l|}{\reb{72.30\tiny\(\pm\)9.54}}     & \reb{61.85\tiny\(\pm\)3.67}    & \multicolumn{1}{l|}{\reb{27.54\tiny\(\pm\)12.03}}     & \reb{70.48\tiny\(\pm\)2.46}     & \multicolumn{1}{l|}{\reb{38.61\tiny\(\pm\)14.61}}     & \reb{64.60\tiny\(\pm\)2.14}     & \multicolumn{1}{l|}{\reb{50.25\tiny\(\pm\)8.65}}      & \reb{60.22\tiny\(\pm\)2.40}         & \multicolumn{1}{l|}{\reb{36.88\tiny\(\pm\)15.93}} & \reb{+0.14} & \reb{+0.43} \\

\multicolumn{1}{l|}{\reb{\textbf{GNNML3+SSR}}}    & \reb{91.86\tiny\(\pm\)1.30}     & \multicolumn{1}{l|}{\reb{69.96\tiny\(\pm\)5.50}}     & \reb{64.95\tiny\(\pm\)2.82}     & \multicolumn{1}{l|}{\reb{26.56\tiny\(\pm\)4.68}}    & \reb{74.33\tiny\(\pm\)1.85}     & \multicolumn{1}{l|}{\reb{49.02\tiny\(\pm\)7.42}}     & \reb{63.19\tiny\(\pm\)4.35}     & \multicolumn{1}{l|}{\reb{54.30\tiny\(\pm\)10.33}}    & \reb{63.27\tiny\(\pm\)2.74}         & \multicolumn{1}{l|}{\reb{45.74\tiny\(\pm\)11.47}}   & \makecolor\reb{+1.41} & \reb{+4.43}\\

\multicolumn{1}{l|}{\reb{\textbf{GNNML3+CIGAv1}}}                    & \reb{91.25\tiny\(\pm\)4.67}     & \multicolumn{1}{l|}{\reb{72.80\tiny\(\pm\)5.88}}     & \reb{64.23\tiny\(\pm\)2.10}    & \multicolumn{1}{l|}{\reb{34.95\tiny\(\pm\)15.92}}     & \reb{73.89\tiny\(\pm\)4.73}     & \multicolumn{1}{l|}{\reb{52.01\tiny\(\pm\)12.59}}     & \reb{61.89\tiny\(\pm\)4.53}     & \multicolumn{1}{l|}{\reb{45.25\tiny\(\pm\)7.80}}      & \reb{60.95\tiny\(\pm\)3.44}         & \multicolumn{1}{l|}{\reb{38.80\tiny\(\pm\)12.36}} & \reb{+0.64} & \reb{+4.08} \\

\multicolumn{1}{l|}{\reb{\textbf{GNNML3+CIGAv2}}}                    & \reb{89.61\tiny\(\pm\)1.46}     & \multicolumn{1}{l|}{\reb{67.17\tiny\(\pm\)5.94}}     & \reb{64.19\tiny\(\pm\)5.36}    & \multicolumn{1}{l|}{\reb{27.28\tiny\(\pm\)12.79}}     & \reb{75.07\tiny\(\pm\)2.64}     & \multicolumn{1}{l|}{\reb{54.50\tiny\(\pm\)9.29}}     & \reb{60.28\tiny\(\pm\)4.56}     & \multicolumn{1}{l|}{\reb{46.25\tiny\(\pm\)10.42}}      & \reb{60.60\tiny\(\pm\)1.16}         & \multicolumn{1}{l|}{\reb{42.19\tiny\(\pm\)12.61}} & \reb{+0.14} & \reb{+2.80} \\

\multicolumn{1}{l|}{\textbf{GNNML3+SIA}}             & 92.70\tiny\(\pm\)0.81     & \multicolumn{1}{l|}{70.43\tiny\(\pm\)6.36}     & 64.57\tiny\(\pm\)2.72     & \multicolumn{1}{l|}{37.73\tiny\(\pm\)7.68}     & 69.32\tiny\(\pm\)3.79     & \multicolumn{1}{l|}{48.94\tiny\(\pm\)10.62}    & 63.91\tiny\(\pm\)5.81     & \multicolumn{1}{l|}{48.85\tiny\(\pm\)12.11}    & 61.58\tiny\(\pm\)3.98         & \multicolumn{1}{l|}{49.70\tiny\(\pm\)17.85}  & +0.61 & \makecolor+6.45  \\

\midrule

\multicolumn{1}{l|}{\reb{\textbf{SMP}}}    & \reb{92.30\tiny\(\pm\)2.62}     & \multicolumn{1}{l|}{\reb{80.25\tiny\(\pm\)5.98}}     & \reb{61.32\tiny\(\pm\)5.69}     & \multicolumn{1}{l|}{\reb{28.71\tiny\(\pm\)6.55}}    & \reb{76.87\tiny\(\pm\)1.90}     & \multicolumn{1}{l|}{\reb{45.69\tiny\(\pm\)15.96}}     & \reb{51.09\tiny\(\pm\)6.39}     & \multicolumn{1}{l|}{\reb{22.98\tiny\(\pm\)16.26}}    & \reb{49.15\tiny\(\pm\)6.92}         & \multicolumn{1}{l|}{\reb{30.73\tiny\(\pm\)11.30}}   & \reb{-} & \reb{-} \\ 
 \bottomrule
\end{tabular}
}
\end{table*}

\subsection{(RQ1) Is Cycle Information Helpful?}
\label{subsec:exp_rq1}

In this section, we aim to evaluate whether cycle information in {our} three {proposed} strategies (\sia, \ssl, and \ac) helps enhance the size generalizability of GNNs. 

\vspace{0.15cm}
\noindent \textbf{Setup.} {We evaluate the size generalizability of GNNs by training them on small graphs and testing their graph classification performance on large graphs. Better size generalizability translates into better performance on large graphs.}
\Cref{tab:cycle_aware_table} showcases the size generalizability results for our three proposed cycle-aware strategies, which we evaluate across five distinct datasets and  six different backbone models. 
Notably, to better compare different strategies, the last column gives the average improvements compared with the original model evaluated across all datasets.

\vspace{0.15cm}
\noindent 
{\textbf{Results.} First, all of the cycle-aware strategies consistently lead to improvements in large test datasets without
sacrificing the performance on small graphs. On average, these enhancements can reach up to 8.4\%, affirming the effectiveness of cycle information in improving GNN size generalizability, as discussed in \Cref{sec: analysis}. 
Second, it is worth noting that cycle lengths provide more valuable information for enhancing size generalizability of GNNs. This is evident from the consistently better performance of the strategies \ac and \sia  compared to the \ssl strategy on large test graphs, which solely predicts whether a node belongs to a cycle in the auxiliary task. 
Third, the simple attention-based strategy, \sia, achieves the best overall performance improvements in all scenarios. 
On average, the \sia strategy enhances both in-distribution and out-of-distribution generalization, while \ssl and \ac excel particularly in out-of-distribution generalization. Additionally, \sia achieves the highest average improvements on large test graphs. 
We attribute this to the challenges GNNs face in effectively learning cycle information, as shown in recent literature ~\cite{chen2020can}.} \reb{Also, in \Cref{appen:ablation_attention}, we conduct an ablation study further demonstrating that the attention mechanism, without considering the cycle information, cannot improve the GNN size generalizability.}

\vspace{0.15cm}
\noindent 
\textbf{Discussion.} 
The three strategies examined in this study primarily highlight the importance of cycle information in achieving size generalization. However, we do not recommend the combined use of all three strategies. \Cref{tab:cycle_aware_table} demonstrates that these strategies differ in their effectiveness for in-distribution generalization, with \ac sometimes leading to performance degradation. Notably, \ssl can be considered a subset of \sia, as \sia incorporates more comprehensive information, including both cycle length and is\_cycle, into its mechanism. Therefore, we recommend using \sia in practice due to its broader scope, while the other two strategies serve to illustrate the practical value of cycle information.

\begin{table*}[ht]
    \centering
    \caption{Performance of baseline and \sia across various model architectures in the synthetic dataset.}
    \vspace{-0.2cm}
    \label{tab:synthetic_performance}
    \renewcommand{\arraystretch}{1.2}
    \resizebox{\linewidth}{!}{ 
    \begin{tabular}{lcccccccccccc}
        \toprule
        Method & MLP-small & MLP-large & GCN-small & GCN-large & GAT-small & GAT-large & GIN-small & GIN-large & FAGCN-small & FAGCN-large & Avg-small & Avg-large \\
        \midrule
        Original & 63.61±4.50 & 59.72±8.50 & 64.75±1.56 & 32.61±8.36 & 65.48±2.17 & 55.14±5.64 & 67.01±1.60 & 44.16±6.12 & 65.32±1.86 & 61.45±1.88 & 65.23 & 50.62 \\
        SIA      & 63.82±4.30 & 62.56±8.48 & 72.88±3.67 & 69.61±9.38 & 64.52±4.56 & 63.72±6.33 & 74.13±4.23 & 67.56±8.67 & 71.80±3.42 & 69.07±3.11 & \makecolor69.43 & \makecolor66.50 \\
        \bottomrule
    \end{tabular}
    }
\end{table*}

\subsection{(RQ2) Comparison with Other Baselines}
\label{subsec:comparison_baseline}
{We now compare our best-performing strategy, \sia, with other approaches and present their respective graph classification performances in \Cref{tab:baseline_table}. Averaged improvements are presented in \Cref{tab:avg_results} in \Cref{appen:average_results}. \reb{We find that \sia consistently achieves the best performance compared with other baseline methods.}   
While \reb{\rpgin, \ssr, and \ciga} also enhance the size generalizability of GNNs, the improvements are less pronounced than those of \sia.  
Additionally, it's worth noting that \texttt{+CIGAv1} shows sensitivity to hyperparameters.
\reb{While not explicitly designed for size generalizability, the expressive model \smp demonstrates strong performance on the BBBP dataset due to its cycle detection ability, validating our empirical insights. However, though \smp excels in cycle detection, overall, \sia consistently outperforms \smp.} In addition, our proposed strategies result in minimal increases in training time compared to the baselines. We present the per-epoch runtime statistics in \Cref{tab:epoch_time_comparison} and the average results in \Cref{tab:avg_results}.

\subsection{(RQ3) Effectiveness on Large Dataset}

We evaluate the effectiveness of our method (\sia) on the HIV dataset from~\citet{hu2020open}. Using the ROC-AUC metric~\citep{hu2020open,chen2022learning}, we find that \sia consistently improves performance across all backbone models, as shown in \Cref{tab:large_benchmark}. On average, +SIA yields a +0.15 improvement on small test sets and a +3.63 improvement on large test sets, demonstrating its efficacy on real benchmarks.

\begin{table}[t]
    \centering
    \renewcommand{\arraystretch}{1.2}
        \caption{Performance comparison of different models with and without SIA on large dataset. -S and -L denote small and large test set performance, respectively.}
    \label{tab:large_benchmark}

            \resizebox{\linewidth}{!}{ 
    \begin{tabular}{lcccccc}
        \toprule
        Models & MLP & GCN & GAT & GIN & FAGCN & GNNML3 \\
        \midrule
        Original-S & 67.95±0.08 & 71.29±0.35 & 73.34±1.17 & 71.04±0.34 & 68.35±0.50 & 74.22±0.61 \\
        Original-L & 38.97±1.07 & 38.35±1.29 & 41.22±1.26 & 43.55±0.44 & 50.92±0.33 & 47.44±0.84 \\
        +SIA-S & 70.64±1.14 & 70.77±1.42 & 72.33±0.32 & 69.62±0.43 & 69.53±0.39 & 74.52±1.55 \\
        +SIA-L & 40.93±0.79 & 43.24±1.44 & 46.72±1.05 & 43.55±0.45 & 52.58±1.45 & 54.68±0.44 \\
        \bottomrule
    \end{tabular}
    }
\end{table}

\subsection{\textbf{(RQ4)} Verification on Synthetic Dataset}
\label{sec:additional_emprical}

Going beyond real datasets, we further evaluate various models on synthetic data (\Cref{subsec:setup}) and verify our findings in controlled settings. The results, shown in \Cref{tab:synthetic_performance}, highlight two key observations. First, the original model struggles to generalize across different graph sizes. Second, the effectiveness of \sia is even more pronounced in this controlled synthetic setting.

Additionally, we extend the aligning/breaking cycle analysis from \Cref{tab:eigen_difference_break_align} to the synthetic dataset. As shown in \Cref{tab:synthetic_cycle_comparison}, while the absolute spectral differences between graphs of different and similar sizes remain largely unchanged, the relative spectral differences exhibit clear variations. These observations reinforce our claims.
\begin{table}[t]
    \centering
    \caption{Effects of breaking and aligning cycles on synthetic datasets, evaluated using the same metrics as in \Cref{tab:eigen_difference_break_align}.}
    \label{tab:synthetic_cycle_comparison}
    \renewcommand{\arraystretch}{1.2}
        \resizebox{\linewidth}{!}{ 
    \begin{tabular}{lccc}
        \toprule
        & \textbf{Original} & \textbf{Breaking Cycles} & \textbf{Aligning Cycles} \\
        \midrule
        Different Size & 0.0051 & 0.0049 & 0.0050 \\
        Similar Size  & 0.0016 & 0.0011 & 0.0021 \\
        Relative Difference & 220\% & 356\% \textbf{(+136\%)} & 134\% \textbf{(-86\%)} \\
        \bottomrule
    \end{tabular}
    }
    
\end{table}

\color{black}

\section{Related Work} 
\label{sec:related work}

Literature on GNNs' size generalization presents conflicting views. On one hand, empirical studies highlight GNNs' size generalizability in physics simulation~\cite{sanchez2020learning} and algorithmic reasoning~\cite{xu2021neural}. 
Levie et al. \cite{levie2021transferability} theoretically showed that spectral GNNs robustly transfer between graphs with varied sizes when discretizing the same underlying space.

On the other hand, some works suggest that GNNs may require additional {information} to achieve size generalizability. For instance, Yan et al. \cite{yan2020neural} and Velivckovic et al. \cite{velivckovicneural} found that neural networks effectively generalize to larger graphs than those used in training when attention weights are properly supervised. In contrast, Yehudai et al. \cite{yehudai2021local} argued that performance degradation is linked to changes in degree patterns within multi-hop neighborhoods, known as `d-patterns'. However, our findings indicate that d-patterns do not adequately capture variations in cycle statistics, a critical factor for generalizing GNNs to larger biological graphs (\Cref{appen:d_pattern}). Additionally, our analysis shows that degree distribution does not correlate distinctly with graph size in biological datasets (\Cref{appen:deg_distri_fromLocal}).

In general, various methodologies have been proposed to address performance degradation from size shifts, including 
attention with proper thresholding~\cite{knyazev2019understanding}, stochastic anchoring~\cite{trivedi2024accurate}, 
regularization that improves model robustness to size shifts via graph coarsening~\cite{buffellisizeshiftreg},
a causal model to learn invariant representations for approximately better extrapolation between training and test data~\cite{bevilacqua2021size}, 
structural causal models for robust out-of-distribution generalization through invariant subgraph identification~\cite{chen2022learning},
a Wasserstein barycenter matching (WBM) layer to improve convergence rates with respect to graph size~\cite{chu2023wasserstein}, and a disentanglement learning framework to encourage size-invariant learning~\cite{huang2024enhancing}. 
Our study stands out as the first to utilize spectral analysis to characterize the types of size-induced distribution shifts, shedding light on the underlying causes that hinder GNNs from effectively generalizing to large graphs.

\reb{Some expressive models also show promise in addressing size generalization. For example, Murphy et al.~\cite{murphy2019relational} proposed a model-agnostic framework that learns graph representations invariant to graph isomorphism given variable-length inputs, while Clement et al.~\cite{vignac2020building} proposed an expressive GNN that performs well on difficult structural tasks, such as cycle detection and diameter computation. Our study validates that expressive models excelling in cycle-related tasks demonstrate good size generalizability.
} 
{Other related works include
studying the size OOD problem in the task of link prediction~\cite{zhou2022ood}, curating OOD datasets for AI-aided drug discovery~\cite{ji2022drugood}, and
improving uncertainty estimates in GNNs under various distribution shifts, including shifts in graph size~\cite{trivedi2024accurate}.}
 
\nocite{xu2024towards,xu2024slmrec,ning2024information,xu2025iagent}
\section{Conclusion}
Our work extensively characterized size-induced distribution shifts and evaluated their impact on GNNs' generalizability to significantly larger test graphs compared to the training set. Through spectral analysis on real-world biological data, we uncovered a strong correlation between graph spectrum and size, which hinders GNNs' size generalization. We further identified the pivotal role of cycle-related information in reducing spectral differences between small and large graphs. Motivated by these findings, we introduced {three cycle-aware strategies} that can be integrated with any GNN architectures. Our empirical findings suggest that cycle information is crucial for size generalization and our proposed method significantly enhances the size generalizability of GNNs.
Overall, this work provides practical insights for enhacing GNN generalization across varying 
graph sizes. 
{An interesting future direction is to extend this work to domains beyond biology.}

\section*{Acknowledgements}
{\small We thank the anonymous reviewers for their constructive feedback. This material is based upon work supported by the National Science Foundation under IIS 2212143, and CAREER Grant No.~IIS 1845491. Any opinions, findings, and conclusions or recommendations expressed in this material are those of the authors and do not necessarily reflect the views of the National Science Foundation or other funding parties.
}

\bibliographystyle{ACM-Reference-Format}
\balance
\bibliography{ref}

\appendix 
\section{Proposition}
\label{app:proposition}

{In this section, we provide theoretical reasoning for the proposition in \Cref{subsec: GNN_spec_gen}.}

 \sloppy 
 Without loss of generality, the output of a spectral GNN at the $(l+1)$-th layer is given by: $\rmX^{(l+1)}=\sigma(\rmU f(\rmlambda)\rmU^{T}\rmX^{(l)}\rmW^{(l)})$ (Section~\ref{sec:prelim}). To facilitate our analysis, we express $\rmU$, $\rmlambda$ and $\rmX^{(l)}$ in terms of their column vectors: $\rmU=[\rmU_1, \rmU_2, \cdots, \rmU_N]$, $\rmlambda=\texttt{Diag}([\lambda_1, \lambda_2, \cdots, \lambda_N])$, and $\rmX^{(l)}=[\rmX^{(l)}_1, \rmX^{(l)}_2, \cdots, \rmX^{(l)}_{D}]$. Here, $\rmU_i$ is the $i$-th column vector of $\rmU$, $\lambda_i$ is the $i$-th largest eigenvalue, and $\rmX^{(l)}_i$ is the $i$-th column vector of $\rmX^{(l)}$. $\texttt{Diag}(\cdot)$ constructs a diagonal matrix with the given vector as its diagonal elements, and $D$ denotes the feature dimension at the $l$-th layer.
 
\begin{equation}
\resizebox{0.95\hsize}{!}{$
\begin{split}
\rmX^{(l+1)} &=\sigma([\rmU_1, \rmU_2, \cdots, \rmU_N]\cdot \texttt{Diag}([f(\lambda_1), f(\lambda_2), \cdots, f(\lambda_N)])\\&\cdot[\rmU_1, \rmU_2, \cdots, \rmU_N]^T \cdot [\rmX^{(l)}_1, \rmX^{(l)}_2, \cdots, \rmX^{(l)}_{D}]\cdot\rmW^{(l)})\\
&=\sigma([f(\lambda_1)\rmU_1, f(\lambda_2)\rmU_2, \cdots, f(\lambda_N)\rmU_N]\cdot([\rmU_1, \rmU_2, \cdots, \rmU_N])^T\\
&\cdot [\rmX^{(l)}_1, \rmX^{(l)}_2, \cdots, \rmX^{(l)}_{D}]\cdot\rmW^{(l)}).
\end{split}$
}
\label{eq:SDGNN_eq}
\end{equation}

Since $\{\rmU_i:i=1,\cdots,N\}$ form an orthonormal basis of $\sR^{N}$, $\rmX^{(l)}_i$ can be expressed as a linear combination of $\{\rmU_i\}$. Thus, supposing that
$\rmX^{(l)}_i=\sum_{j=1}^{N}{\alpha_j^{i}\rmU_j}$, Equation~\ref{eq:SDGNN_eq} can be rewritten as:

\begin{equation}
\resizebox{0.95\hsize}{!}{$
\begin{split}
\rmX^{(l+1)} &=\sigma([f(\lambda_1)\rmU_1, f(\lambda_2)\rmU_2, \cdots, f(\lambda_N)\rmU_N]\cdot([\rmU_1, \rmU_2, \cdots, \rmU_N])^T\\&\cdot [\sum_{j=1}^{N}{\alpha_j^{1}\rmU_j}, \cdots, \sum_{j=1}^{N}{\alpha_j^{i}\rmU_j}, \cdots]\cdot\rmW^{(l)}) \\
&=\sigma\left([f(\lambda_1)\rmU_1, f(\lambda_2)\rmU_2, \cdots, f(\lambda_N)\rmU_N]\cdot 
\begin{bmatrix}
    \alpha_1^1 & \alpha_1^2 & \cdots & \alpha_1^{D} \\
    \alpha_2^1 & \alpha_2^2 & \cdots & \alpha_2^{D} \\
    \cdots & \cdots & \cdots & \cdots \\
    \alpha_N^1 & \alpha_N^2 & \cdots & \alpha_N^{D}
\end{bmatrix}\cdot\rmW^{(l)}\right) \\
&=\sigma([\sum_{j=1}^{N}{f(\lambda_j)\alpha_j^1\rmU_j}, \sum_{j=1}^{N}{f(\lambda_j)\alpha_j^2\rmU_j}, \cdots, \sum_{j=1}^{N}{f(\lambda_j)\alpha_j^D\rmU_j}]\cdot\rmW^{(l)}).
\end{split}$
}
\label{eq:SDGNN_eq_2}
\end{equation}

We know that $\alpha_j^i=\rmU_j^T\cdot\rmX^{(l)}_i=\texttt{COSINE}(\rmU_j, \rmX^{(l)}_i)\cdot \lVert\rmX^{(l)}_i\rVert_2=\texttt{COSINE}(\rmU_j, \rmX^{(l)}_i)\cdot \frac{\lVert\rmX^{(l)}_i\rVert_2}{\sqrt{N}}\cdot \sqrt{N}$. 
Then Equation~\ref{eq:SDGNN_eq_2} can be rewritten as:

\begin{equation}
\resizebox{0.92\hsize}{!}{$
\begin{split}
\rmX^{(l+1)} &= \sigma([\sum_{j=1,\cdots,N}{f(\lambda_j)\texttt{COSINE}(\rmU_j, \rmX^{(l)}_1)\cdot \frac{\lVert\rmX^{(l)}_1\rVert_2}{\sqrt{N}}\cdot (\sqrt{N}\cdot\rmU_j)}, \\
&\cdots, \sum_{j=1,\cdots,N}{f(\lambda_j)\texttt{COSINE}(\rmU_j, \rmX^{(l)}_D)\cdot \frac{\lVert\rmX^{(l)}_D\rVert_2}{\sqrt{N}}\cdot (\sqrt{N}\cdot\rmU_j)}]\cdot\rmW^{(l)}).
\end{split}$
}
\end{equation}

Here, $\frac{\lVert\rmX^{(l)}_i\rVert_2}{\sqrt{N}} = C$ holds due to feature normalization~\cite{singh2022feature}.
\section{Datasets}
\label{appen:dataset_details}

In this section, we provide additional details about the datasets utilized in our study. We utilize {five} pre-processed biological datasets, namely BBBP and BACE from the Open Graph Benchmark~\citep{hu2020open}, and PROTEINS, NCI1, and NCI109 from TuDataset~\citep{Morris+2020}, for graph classification in our experiments.
Each graph in BBBP and BACE represents a molecule, where nodes are atoms, and edges are chemical bonds.
Each node has a 9-dimensional feature vector, which contains its atomic number and chirality, as well as other additional atom features such as formal charge and whether the atom is in the ring or not~\citep{hu2020open}. In the PROTEINS dataset, nodes correspond to amino acids, and an edge connects two nodes if their distance is less than 6 Angstroms. Each node is associated with a 3-dimensional feature vector representing the type of secondary structure elements, such as helix, sheet, or turn. Each graph in NCI1 and NCI109 represents a chemical compound: each node stands for an atom and has a one-hot encoded feature representing the corresponding atom type; each edge represents the chemical bonds between the atoms.
The description of each dataset is summarized as follows:

\begin{itemize}
    \item \textbf{BBBP}: The blood-brain barrier penetration (BBBP) dataset comes from a study on the modeling and the prediction of barrier permeability. It includes binary labels of over 2000 compounds on their permeability properties~\citep{wu2018moleculenet}. 
    \item \textbf{BACE}: The BACE dataset provides quantitative (IC50) and qualitative (binary label) binding results for a set of inhibitors of human $\beta$-secretase 1 (BACE-1). It contains 1522 compounds with their 2D structures and binary labels~\citep{wu2018moleculenet}.
    \item \textbf{PROTEINS}: The PROTEINS dataset comprises the macromolecule graphs of proteins and binary labels of the protein function (being an enzyme or not) for a total of 1113 samples~\citep{Morris+2020}.
    \item \textbf{NCI1} \& \textbf{NCI109}: The NCI1 and NCI109 are two balanced subsets of chemical compounds screened for their activity against non-small cell lung cancer and ovarian cancer cell lines, respectively~\citep{4053093}. NCI1 contains a total of 4110 samples and NCI109 contains a total of 4117 samples.
\end{itemize}
\section{Data Splits and Pre-processing}
\label{appen:data_preprocessing}

In this section, we discuss the techniques employed to address the challenges of class imbalance and label distribution shifts between small and large graphs. {Given our emphasis on size shift, we preprocess the dataset to mitigate the influence from other distribution shifts.} 

{For each dataset, \cref{tab:dataset_class_details} presents the respective class size in the whole dataset, the smallest 50\% subset, and the largest 10\% subset.} This table reveals two main issues. {Firstly, many datasets (BBBP, BACE, and PROTEINS) demonstrate imbalanced label distributions.} Secondly, both the smallest 50\% and the largest 10\% graphs exhibit class imbalance, and there is a notable difference in the label distribution between these subsets. To mitigate these issues, we carefully split our data and apply the upsampling technique.

\begin{table}[ht!]
\caption{Label distributions of BBBP, BACE, PROTEINS, NCI1 and NCI109 in the entire dataset, the smallest 50\% subset, and the largest 10\% subset.}
\label{tab:dataset_class_details}
\centering
\resizebox{\linewidth}{!}{ 
\begin{tabular}{@{}ccccccc@{}}
\toprule
\multicolumn{2}{c}{\textbf{Dataset}}                                                                       & \multicolumn{1}{l}{\textbf{BBBP}} & \multicolumn{1}{l}{\textbf{BACE}} & \multicolumn{1}{l}{\textbf{PROTEINS}} & \multicolumn{1}{l}{\textbf{NCI1}} & \multicolumn{1}{l}{\textbf{NCI109}} \\ \midrule
\multirow{2}{*}{Entire Dataset} & \begin{tabular}[c]{@{}c@{}}Number of \\ Class 0 Samples\end{tabular} & 479                               & 822                               & 663     & 2053 &      2048                       \\
                                    & \begin{tabular}[c]{@{}c@{}}Number of \\ Class 1 Samples\end{tabular} & 1560                              & 691                               & 450            & 2057 & 2079                       \\ \midrule
\multirow{2}{*}{Smallest 50\%}      & \begin{tabular}[c]{@{}c@{}}Number of\\ Class 0 Samples\end{tabular}  & 138                               & 501                               & 232   & 1283 & 1283                               \\
                                    & \begin{tabular}[c]{@{}c@{}}Number of\\ Class 1 Samples\end{tabular}  & 882                               & 256                               & 325         &    772 & 781                       \\ \midrule
\multirow{2}{*}{Largest 10\%}       & \begin{tabular}[c]{@{}c@{}}Number of\\ Class 0 Samples\end{tabular}  & 122                               & 53                                & 101               & 78 & 87                    \\
                                    & \begin{tabular}[c]{@{}c@{}}Number of \\ Class 1 Samples\end{tabular} & 82                                & 99                                & 11     &333 & 326                              \\ \bottomrule
\end{tabular}
}
\end{table}

\textbf{Data splits.} For each dataset, we create four distinct splits: train, validation, small\_test, and large\_test. The large\_test split consists of graphs with significantly larger sizes compared to the other splits.
To generate the train, validation, and small\_test subsets, we initially select the smallest 50\% of graphs from the dataset. The division among these subsets follows a ratio of 0.7:0.15:0.15, respectively. Importantly, we ensure that the data is split for each class using the same ratio, thereby maintaining consistent label distributions across the train, validation, and small\_test subsets. Next, we generate the large\_test subset by selecting the same number of graphs per class as in the small\_test subset. The selection process begins with the largest graph within each class and ensures that the large\_test subset maintains the same class distribution as the small\_test subset.
\cref{tab:dataset_statitics_after_balancing} shows the class size obtained after applying this operation. 

\textbf{Upsampling.} As can be seen in \cref{tab:dataset_class_details}, even after performing the appropriate data split, each dataset still exhibits varying degrees of class imbalance. To avoid training an extremely biased model, we adopt the upsampling technique during the training process. Specifically, we upsample the graphs in class 0 of BBBP at a ratio of 6, the graphs in class 1 of BACE at a ratio of 2, $\frac{2}{3}$ of the graphs in class 0 of PROTEINS at a ratio of 2, and $\frac{2}{3}$ of the graphs in class 1 of NCI1 and NCI109 at a ratio of 2. 

\begin{table}[ht!]
\centering
\caption{Label distributions after proper data splits. both the Both the small\_test and large\_test subsets now have the same class distribution.}
\label{tab:dataset_statitics_after_balancing}
\resizebox{\linewidth}{!}{ 
\begin{tabular}{@{}ccccccc@{}}
\toprule
\multicolumn{2}{c}{\textbf{Dataset}}                                                                & \multicolumn{1}{l}{\textbf{BBBP}} & \multicolumn{1}{l}{\textbf{BACE}} & \multicolumn{1}{l}{\textbf{PROTEINS}} & \multicolumn{1}{l}{\textbf{NCI1}} & \multicolumn{1}{l}{\textbf{NCI109}}\\ \midrule
\multirow{2}{*}{Train}       & \begin{tabular}[c]{@{}c@{}}Number of \\ Class 0 Samples\end{tabular} & 96                                & 350                               & 140  & 898    & 898                              \\
                             & \begin{tabular}[c]{@{}c@{}}Number of \\ Class 1 Samples\end{tabular} & 617                               & 179                               & 210     & 540            & 546                  \\ \midrule
\multirow{2}{*}{Val}         & \begin{tabular}[c]{@{}c@{}}Number of\\ Class 0 Samples\end{tabular}  & 22                                & 76                                & 30   & 193  & 193                                \\
                             & \begin{tabular}[c]{@{}c@{}}Number of\\ Class 1 Samples\end{tabular}  & 133                               & 39                                & 46                                  & 117 & 118 \\ \midrule
\multirow{2}{*}{Small\_test} & \begin{tabular}[c]{@{}c@{}}Number of\\ Class 0 Samples\end{tabular}  & 20                                & 75                                & 30        &192  & 192                          \\
                             & \begin{tabular}[c]{@{}c@{}}Number of \\ Class 1 Samples\end{tabular} & 132                               & 38                                & 45                                   & 115 & 117\\ \midrule
\multirow{2}{*}{Large\_test} & \begin{tabular}[c]{@{}c@{}}Number of\\ Class 0 Samples\end{tabular}  & 20                                & 75                                & 30           & 192   & 192                      \\
                             & \begin{tabular}[c]{@{}c@{}}Number of \\ Class 1 Samples\end{tabular} & 132                               & 38                                & 45                                   & 115 & 117\\ \bottomrule
\end{tabular}
}
\end{table}
\section{Training Details}
\label{appen:training_details}

In this section, we provide the details of our training process for the purpose of reproducibility.

To ensure fair comparisons, we maintain consistent hyperparameter settings across all experiments. Specifically, we use a batch size of 30 and a learning rate of 0.001. The Adam optimizer is employed without gradient clipping. For the baseline models, we utilize three graph convolution layers with a global max pooling. The previously described upsampling technique is applied to all methods. {To prevent overfitting, we implement early stopping with a patience period of 50 epochs, and the lowest validation loss as the criterion for model selection.} Each method is executed five times, and we report the average result along with the standard deviation in \Cref{tab:cycle_aware_table} and \Cref{tab:baseline_table}. 

To ensure consistency with the original papers, we set the other hyperparameters of each baseline model to match the values specified in their respective papers. \reb{For \ssr, we use the same hyperparameter settings as in~\cite{buffellisizeshiftreg} and set the regularization coefficient $\lambda=0.1$ and coarsening ratios $C=\{0.8, 0.9\}$ for all datasets. We adhere to the authors' recommendation and employ spectral graph coarsening as the coarsening algorithm. Additionally, we fine-tune the aggregation strategy, selecting from options such as max, mean, and sum, to derive features for the nodes in the coarsened version. For \smp and \rpgin, we use the original code provided by the authors. For \smp, we tune the number of SMP and FastSMP layers from the set \{3,4\}. Based on the validation results, we implement 3 FastSMP layers. For \rpgin, we use the same hyperparameter settings as in \citep{bevilacqua2021size, buffellisizeshiftreg}. For \ciga, we follow the same hyperparameter search range as specified in~\cite{chen2022learning}.} 
Regarding the \ssl method that we proposed, we tune $\lambda$ over $\{1, 1e-1, 1e-2, 1e-3, 1e-4, 1e-5\}$. Based on the validation results, we identify $\lambda=1e-2$ for NCI1, and NCI109, $\lambda=1$ for PROTEINS, BBBP, and $\lambda=1e-2$ for BACE. For our proposed cycle augmentation, we use the set of hyperparameters $n, \mathcal{R}$ from \Cref{appen:break_align_cycles}. For \sia, we have no hyperparameters. 
\section{Additional Tables}
\label{appen:average_results}

To facilitate the interpretation of the experimental results, we present the averaged improvements on both small and large graphs, evaluated across various model backbones and datasets, in \Cref{tab:avg_results}. Our findings suggest that incorporating cycle information enhances size generalization, with \sia demonstrating the most significant improvements. 
In addition, we show the per-epoch runtime in \Cref{tab:epoch_time_comparison}.

\begin{table}[h!]
    \centering
    \caption{Comparison of per-epoch time across different methods, measured in minutes.}
    \vspace{-0.2cm}
    \label{tab:epoch_time_comparison}
    \renewcommand{\arraystretch}{1.2}
    \resizebox{\linewidth}{!}{ 
    \begin{tabular}{lccccc}
        \toprule
        & \textbf{bbbp} & \textbf{bace} & \textbf{proteins} & \textbf{NCI1} & \textbf{NCI109} \\
        \midrule
        Original & 0.33 & 0.21 & 0.13 & 0.55 & 0.53 \\
        \ssr & 1.55 & 0.93 & 0.74 & 2.36 & 2.22 \\
        +CIGAv2 & 3.10 & 1.92 & 1.32 & 4.25 & 4.11 \\
        \ssl (Ours) & 0.33 & 0.21 & 0.12 & 0.57 & 0.54 \\
        \ac (Ours) & 0.34 & 0.21 & 0.13 & 0.54 & 0.53 \\
        \sia (Ours) & 1.32 & 0.78 & 0.55 & 1.98 & 1.94 \\
        \bottomrule
    \end{tabular}
    }
\end{table}

\begin{table}[t]
\caption{Averaged experimental results for model-agnostic methods evaluated across various model backbones and datasets. The results indicate that \sia achieves the most significant overall improvements.}
\label{tab:avg_results}
\resizebox{\linewidth}{!}{
\begin{tabular}{@{}l|ccccccc@{}}
\toprule
Method             & \multicolumn{1}{l}{SIA} & \multicolumn{1}{l}{SSL} & \multicolumn{1}{l}{AUG} & \multicolumn{1}{l}{GRPGNN} & \multicolumn{1}{l}{SSR} & \multicolumn{1}{l}{CIGAv1} & \multicolumn{1}{l}{CIGAv2} \\ \midrule
Avg Improv (Small) & \makecolor 1.09                    & 0.78                    & 0.07                    & 0.37                       & 0.73                    & 0.24                       & -0.05                      \\
Avg Improv (Large) &\makecolor 6.02                    & 2.55                    & 3.44                    & 1.15                       & 2.43                    & 3.24                       & 4.29                       \\ \bottomrule
\end{tabular}
}
\end{table}

\section{Ablation Studies on Attention Mechanism}
\label{appen:ablation_attention}

\begin{table*}[!ht]
\caption{Size generalizability evaluated with \texttt{+NA}, following the same rule as in~\Cref{tab:cycle_aware_table}. Naive attention is not helpful for size generalization.}

\label{tab:NA_table}
\resizebox{\linewidth}{!}{ 
\begin{tabular}{@{}lllllllllllll@{}}
\toprule
\multicolumn{1}{c}{\textbf{Datasets}}                & \multicolumn{2}{c}{\textbf{BBBP}}                    & \multicolumn{2}{c}{\textbf{BACE}}                    & \multicolumn{2}{c}{\textbf{PROTEINS}}                & \multicolumn{2}{c}{\textbf{NCI1}}                    & \multicolumn{2}{c}{\textbf{NCI109}} & \multicolumn{2}{c}{\textbf{Avg Improv}} \\ \midrule
\multicolumn{1}{l|}{\textbf{Models}}                 & \textbf{Small} & \multicolumn{1}{l|}{\textbf{Large}} & \textbf{Small} & \multicolumn{1}{l|}{\textbf{Large}} & \textbf{Small} & \multicolumn{1}{l|}{\textbf{Large}} & \textbf{Small} & \multicolumn{1}{l|}{\textbf{Large}} & \textbf{Small}     & \multicolumn{1}{l|}{\textbf{Large}} & \textbf{Small}        & \textbf{Large}         \\ \midrule
\multicolumn{1}{l|}{\textbf{MLP}}                    & 90.36\tiny\(\pm\)0.71     & \multicolumn{1}{l|}{55.61\tiny\(\pm\)3.37}     & 61.06\tiny\(\pm\)5.79     & \multicolumn{1}{l|}{21.06\tiny\(\pm\)7.89}     & 36.15\tiny\(\pm\)2.28     & \multicolumn{1}{l|}{21.55\tiny\(\pm\)1.34}     & 36.43\tiny\(\pm\)3.89     & \multicolumn{1}{l|}{3.36\tiny\(\pm\)2.87}      & 35.87\tiny\(\pm\)4.23         & \multicolumn{1}{l|}{4.65\tiny\(\pm\)3.72} & - & - \\
\multicolumn{1}{l|}{\textbf{MLP+NA}}                & 88.83\tiny\(\pm\)1.40      & \multicolumn{1}{l|}{62.09\tiny\(\pm\)10.27}     & 56.38\tiny\(\pm\)15.89     & \multicolumn{1}{l|}{21.34\tiny\(\pm\)8.52}    & 38.87\tiny\(\pm\)13.04       & \multicolumn{1}{l|}{29.42\tiny\(\pm\)23.83}     & 28.89\tiny\(\pm\)5.96     & \multicolumn{1}{l|}{1.98\tiny\(\pm\)0.98}      & 38.33\tiny\(\pm\)6.74          & \multicolumn{1}{l|}{1.55\tiny\(\pm\)0.01}  & -1.71 & +2.03  \\

\multicolumn{1}{l|}{\textbf{MLP+SIA}}                & 90.38\tiny\(\pm\)1.05     & \multicolumn{1}{l|}{62.79\tiny\(\pm\)7.55}     & 60.85\tiny\(\pm\)7.83     & \multicolumn{1}{l|}{21.79\tiny\(\pm\)15.07}      & 40.68\tiny\(\pm\)3.56     & \multicolumn{1}{l|}{33.57\tiny\(\pm\)11.87}    & 35.42\tiny\(\pm\)3.83     & \multicolumn{1}{l|}{3.26\tiny\(\pm\)2.55}      & 39.20\tiny\(\pm\)2.96          & \multicolumn{1}{l|}{12.2\tiny\(\pm\)5.05}    & \makecolor+1.33  & \makecolor+5.48 \\ \midrule
\multicolumn{1}{l|}{\textbf{GCN}}                    & 91.37\tiny\(\pm\)0.59     & \multicolumn{1}{l|}{68.59\tiny\(\pm\)7.47}     & 63.68\tiny\(\pm\)6.63     & \multicolumn{1}{l|}{28.72\tiny\(\pm\)14.26}     & 72.35\tiny\(\pm\)2.58     & \multicolumn{1}{l|}{40.57\tiny\(\pm\)7.67}     & 54.91\tiny\(\pm\)2.37     & \multicolumn{1}{l|}{28.80\tiny\(\pm\)7.57}     & 60.83\tiny\(\pm\)1.92         & \multicolumn{1}{l|}{30.45\tiny\(\pm\)4.34}   & - & - \\
\multicolumn{1}{l|}{\textbf{GCN+NA}}                & 89.45\tiny\(\pm\)1.65     & \multicolumn{1}{l|}{64.38\tiny\(\pm\)7.10}     & 62.03\tiny\(\pm\)1.12     & \multicolumn{1}{l|}{28.19\tiny\(\pm\)16.32}    & 69.95\tiny\(\pm\)3.27     & \multicolumn{1}{l|}{52.41\tiny\(\pm\)8.71}     & 60.37\tiny\(\pm\)1.40     & \multicolumn{1}{l|}{31.52\tiny\(\pm\)4.18}    & 54.97\tiny\(\pm\)2.57         & \multicolumn{1}{l|}{27.60\tiny\(\pm\)10.07} & -1.27 & -0.41  \\

\multicolumn{1}{l|}{\textbf{GCN+SIA}}                & 91.32\tiny\(\pm\)0.73     & \multicolumn{1}{l|}{71.66\tiny\(\pm\)6.99}     & 64.35\tiny\(\pm\)9.76     & \multicolumn{1}{l|}{24.24\tiny\(\pm\)17.03}     & 73.84\tiny\(\pm\)3.65     & \multicolumn{1}{l|}{58.74\tiny\(\pm\)9.49}     & 59.78\tiny\(\pm\)1.65     & \multicolumn{1}{l|}{45.70\tiny\(\pm\)6.70}     & 60.32\tiny\(\pm\)2.90         & \multicolumn{1}{l|}{38.78\tiny\(\pm\)4.55} & \makecolor+1.29 & \makecolor+8.40   \\ \midrule
\multicolumn{1}{l|}{\textbf{GAT}}                    & 91.27\tiny\(\pm\)1.43     & \multicolumn{1}{l|}{68.35\tiny\(\pm\)7.02}     & 69.73\tiny\(\pm\)2.05     & \multicolumn{1}{l|}{42.23\tiny\(\pm\)11.18}    & 72.25\tiny\(\pm\)4.25    & \multicolumn{1}{l|}{43.86\tiny\(\pm\)6.82}    & 58.22\tiny\(\pm\)2.86     & \multicolumn{1}{l|}{49.36\tiny\(\pm\)4.12}     & 64.39\tiny\(\pm\)3.29         & \multicolumn{1}{l|}{38.36\tiny\(\pm\)8.93}   &  - & - \\
\multicolumn{1}{l|}{\textbf{GAT+NA}}                & 91.16\tiny\(\pm\)1.45     & \multicolumn{1}{l|}{70.08\tiny\(\pm\)6.42}     & 63.42\tiny\(\pm\)2.64     & \multicolumn{1}{l|}{34.08\tiny\(\pm\)13.43}    & 72.25\tiny\(\pm\)2.24     & \multicolumn{1}{l|}{53.25\tiny\(\pm\)9.07}      & 60.98\tiny\(\pm\)0.93     & \multicolumn{1}{l|}{29.76\tiny\(\pm\)2.58}     & 60.18\tiny\(\pm\)5.91         & \multicolumn{1}{l|}{35.24\tiny\(\pm\)5.82}  & -1.57 & -3.95  \\
\multicolumn{1}{l|}{\textbf{GAT+SIA}}                & 91.88\tiny\(\pm\)2.12     & \multicolumn{1}{l|}{74.87\tiny\(\pm\)5.62}     & 69.64\tiny\(\pm\)6.79     & \multicolumn{1}{l|}{43.87\tiny\(\pm\)7.98}     & 75.35\tiny\(\pm\)3.28     & \multicolumn{1}{l|}{62.71\tiny\(\pm\)4.98}     & 61.42\tiny\(\pm\)1.07     & \multicolumn{1}{l|}{55.73\tiny\(\pm\)12.98}    & 63.27\tiny\(\pm\)3.15         & \multicolumn{1}{l|}{45.97\tiny\(\pm\)7.74}  & \makecolor+1.14 & \makecolor+8.20  \\ \midrule
\multicolumn{1}{l|}{\textbf{GIN}}                    & 88.28\tiny\(\pm\)2.39     & \multicolumn{1}{l|}{66.67\tiny\(\pm\)5.55}     & 57.02\tiny\(\pm\)6.48     & \multicolumn{1}{l|}{22.97\tiny\(\pm\)10.26}     & 74.55\tiny\(\pm\)4.27     & \multicolumn{1}{l|}{50.20\tiny\(\pm\)5.36}     & 62.17\tiny\(\pm\)3.86     & \multicolumn{1}{l|}{44.26\tiny\(\pm\)7.03}     & 62.42\tiny\(\pm\)2.77         & \multicolumn{1}{l|}{33.23\tiny\(\pm\)6.77}  & - & -  \\
\multicolumn{1}{l|}{\textbf{GIN+NA}}                & 89.59\tiny\(\pm\)2.64     & \multicolumn{1}{l|}{69.89\tiny\(\pm\)12.97}     & 55.46\tiny\(\pm\)10.59     & \multicolumn{1}{l|}{18.92\tiny\(\pm\)15.76}    & 73.59\tiny\(\pm\)4.06     & \multicolumn{1}{l|}{54.14\tiny\(\pm\)8.14}     & 61.15\tiny\(\pm\)4.62     & \multicolumn{1}{l|}{35.73\tiny\(\pm\)9.73}     & 61.62\tiny\(\pm\)3.08         & \multicolumn{1}{l|}{32.73\tiny\(\pm\)8.67} & -0.61 &  -1.18  \\

\multicolumn{1}{l|}{\textbf{GIN+SIA}}                & 92.70\tiny\(\pm\)0.45     & \multicolumn{1}{l|}{75.99\tiny\(\pm\)4.74}     & 61.30\tiny\(\pm\)6.77     & \multicolumn{1}{l|}{24.42\tiny\(\pm\)16.37}    & 74.88\tiny\(\pm\)4.24     & \multicolumn{1}{l|}{51.36\tiny\(\pm\)7.76}     & 62.83\tiny\(\pm\)1.07     & \multicolumn{1}{l|}{42.82\tiny\(\pm\)8.92}     & 63.00\tiny\(\pm\)4.24          & \multicolumn{1}{l|}{41.65\tiny\(\pm\)4.19}  & \makecolor+2.05  & \makecolor+3.78  \\ \midrule
\multicolumn{1}{l|}{\textbf{FAGCN}}                  & 90.58\tiny\(\pm\)1.72     & \multicolumn{1}{l|}{64.93\tiny\(\pm\)7.62}     & 62.96\tiny\(\pm\)2.12     & \multicolumn{1}{l|}{24.65\tiny\(\pm\)11.71}    & 70.03\tiny\(\pm\)5.20      & \multicolumn{1}{l|}{42.34\tiny\(\pm\)6.61}     & 43.51\tiny\(\pm\)4.29     & \multicolumn{1}{l|}{10.16\tiny\(\pm\)7.80}     & 55.78\tiny\(\pm\)3.5          & \multicolumn{1}{l|}{22.65\tiny\(\pm\)12.87}  & - & - \\
\multicolumn{1}{l|}{\textbf{FAGCN+NA}}              & 91.97\tiny\(\pm\)1.67     & \multicolumn{1}{l|}{72.30\tiny\(\pm\)10.54}     & 58.39\tiny\(\pm\)4.25     & \multicolumn{1}{l|}{10.71\tiny\(\pm\)9.0}    & 70.47\tiny\(\pm\)4.73     & \multicolumn{1}{l|}{51.61\tiny\(\pm\)12.47}     & 50.27\tiny\(\pm\)4.45     & \multicolumn{1}{l|}{12.11\tiny\(\pm\)5.39}      & 50.75\tiny\(\pm\)10.91         & \multicolumn{1}{l|}{10.26\tiny\(\pm\)16.24}  & -0.20 & -1.55   \\

\multicolumn{1}{l|}{\textbf{FAGCN+SIA}}              & 90.17\tiny\(\pm\)2.83     & \multicolumn{1}{l|}{74.65\tiny\(\pm\)9.13}     & 62.40\tiny\(\pm\)3.36     & \multicolumn{1}{l|}{30.35\tiny\(\pm\)13.48}    & 71.30\tiny\(\pm\)5.79     & \multicolumn{1}{l|}{48.94\tiny\(\pm\)10.62}    & 46.95\tiny\(\pm\)5.71     & \multicolumn{1}{l|}{10.99\tiny\(\pm\)7.50}       & 52.82\tiny\(\pm\)6.28         & \multicolumn{1}{l|}{19.08\tiny\(\pm\)5.32} & \makecolor+0.16 & \makecolor+3.86   \\ \midrule
\multicolumn{1}{l|}{\textbf{GNNML3}}                 & 92.01\tiny\(\pm\)1.56     & \multicolumn{1}{l|}{64.18\tiny\(\pm\)6.99}     & 62.31\tiny\(\pm\)4.90     & \multicolumn{1}{l|}{32.94\tiny\(\pm\)12.86}    & 71.59\tiny\(\pm\)3.5      & \multicolumn{1}{l|}{40.74\tiny\(\pm\)15.0}     & 63.73\tiny\(\pm\)4.67     & \multicolumn{1}{l|}{51.75\tiny\(\pm\)9.05}     & 59.39\tiny\(\pm\)3.76         & \multicolumn{1}{l|}{33.80\tiny\(\pm\)11.19} & - & -  \\
\multicolumn{1}{l|}{\textbf{GNNML3+NA}}             & 87.85\tiny\(\pm\)1.96     & \multicolumn{1}{l|}{66.42\tiny\(\pm\)1.93}     & 65.15\tiny\(\pm\)7.50     & \multicolumn{1}{l|}{42.05\tiny\(\pm\)13.30}     & 66.11\tiny\(\pm\)10.07     & \multicolumn{1}{l|}{55.39\tiny\(\pm\)13.23}     & 59.58\tiny\(\pm\)5.06     & \multicolumn{1}{l|}{32.48\tiny\(\pm\)8.90}    & 60.76\tiny\(\pm\)4.95         & \multicolumn{1}{l|}{27.18\tiny\(\pm\)8.65}   & -1.92 & +0.02 \\

\multicolumn{1}{l|}{\textbf{GNNML3+SIA}}             & 92.70\tiny\(\pm\)0.81     & \multicolumn{1}{l|}{70.43\tiny\(\pm\)6.36}     & 64.57\tiny\(\pm\)2.72     & \multicolumn{1}{l|}{37.73\tiny\(\pm\)7.68}     & 69.32\tiny\(\pm\)3.79     & \multicolumn{1}{l|}{48.94\tiny\(\pm\)10.62}    & 63.91\tiny\(\pm\)5.81     & \multicolumn{1}{l|}{48.85\tiny\(\pm\)12.11}    & 61.58\tiny\(\pm\)3.98         & \multicolumn{1}{l|}{49.70\tiny\(\pm\)17.85}  & \makecolor+0.61 & \makecolor+6.45  \\

\bottomrule
\end{tabular}
}
\end{table*}

\reb{We use \na to denote the use of naive attention without using cycle features. The results in \Cref{tab:NA_table} demonstrate that \na does not consistently enhance graph classification for large graphs. This suggests that relying solely on the attention mechanism, without considering cycle information, does not improve the size generalizability of GNNs. In contrast, the incorporation of cycle information in \sia leads to significant improvements on large graphs.}

\section{Details and Discussions on Breaking Cycles and Aligning Cycle Lengths}
{In this section, we begin by elaborating on the procedures for breaking cycles and aligning cycle lengths, which we briefly introduced in \Cref{sec: analysis}. Then, we present the results of an ablation study on adding nodes randomly (instead of based on the underlying cycles).}

\subsection{Implementation Details}
\label{appen:break_align_cycles}
{Our \textbf{cycle-breaking algorithm} is designed to remove all cycles in the graph with minimal edge removal while maintaining the current number of disconnected components. This approach minimally perturbs the graph while focusing on cycles. To achieve this, we begin by obtaining the cycle basis $\mathcal{C}$ (\Cref{sec:prelim}) of the input graph $\mathcal{G}$.
Breaking all cycles can be accomplished by removing just one edge from each cycle in the basis. To ensure that this edge removal process does not increase the number of disconnected components,}
we propose a backtracking algorithm (~\Cref{alg:cycle_break}). 
{We note that while it is possible for \Cref{alg:cycle_break} to have failure cases, luckily, in all the datasets we have examined, we have not observed any instances where cycle breaking fails (i.e., Alg.~\ref{alg:cycle_break} returning False). }

\begin{algorithm}
  \caption{Cycle Breaking}
  \label{alg:cycle_break}
   
  \begin{algorithmic}[1]
    \Procedure{\texttt{CycleBacktracking}}{$i \colon \text{index integer}$, $\mathcal{G} \colon \text{input graph}$, $\mathcal{C} \colon \text{a list of cycle basis}$}

    \If{ $i == \text{len}(\mathcal{C})$ }
    \State \Return \text{True} 
    \EndIf

    \State $N_{\text{before}} \leftarrow \texttt{NumberOfConnectedComponent}(\mathcal{G})$
    \For { \text{edge pair} $\varepsilon$ in $\mathcal{C}_i$ }

    \State {  $\mathcal{G} \leftarrow \text{Remove} (\varepsilon)$    }
    \State {$N_{\text{after}} \leftarrow \texttt{NumberOfConnectedComponent}(\mathcal{G})$}
    \If {$N_{\text{before}} == N_{\text{after}}$}
    \If {\texttt{CycleBacktracking}($i+1, \mathcal{G}, \mathcal{C}$)} 
    \State \Return \text{True}
    \EndIf
    \EndIf
    \State{$ \mathcal{G} \leftarrow \text{add}(\varepsilon)$  }
    \EndFor 
    \State \Return \text{False}
    \EndProcedure
  \end{algorithmic}
  
\end{algorithm}

{Before delving into the cycle length alignment algorithm, we first present the average statistics for both the small and large graphs, as outlined in \Cref{tab:cycle_statistics}. To derive these statistics, we obtain the cycle basis  (\Cref{sec:prelim}) for each input graph and compute the average length across the cycles in that basis. Then, we calculate the average cycle length by further averaging it over all the graphs within the small and large datasets, respectively. Note that small and large graphs differ in both the mean and the standard deviation of cycle lengths, indicating that large graphs may contain notably long cycles that are not present in the small graphs.}

\begin{table}[H]
\caption{Average cycle lengths for small and large graphs before and after cycle length alignment. Small and large graphs in real biological datasets originally differ in cycle length and such difference is reduced after cycle length alignment.}

\label{tab:cycle_statistics}

\resizebox{\linewidth}{!}{
\begin{tabular}{@{}l|cc|cc|cc|cc|cc@{}}
\toprule
              & \multicolumn{2}{c|}{BBBP} & \multicolumn{2}{c|}{BACE} & \multicolumn{2}{c|}{NCI1} & \multicolumn{2}{c|}{NCI109} & \multicolumn{2}{c}{PROTEINS} \\ \midrule
              & small       & large       & small       & large       & small       & large       & small        & large        & small         & large        \\
before alignment  & 5.56\tiny\(\pm\)1.39           & 6.54\tiny\(\pm\)3.52           & 5.70\tiny\(\pm\)0.56   & 6.50\tiny\(\pm\)2.01   & 5.64\tiny\(\pm\)1.19   & 7.04\tiny\(\pm\)3.05   & 5.59\tiny\(\pm\)1.25    & 7.06\tiny\(\pm\)2.88    & 3.20\tiny\(\pm\)0.34    & 6.24\tiny\(\pm\)2.75    \\
after alignment  & 6.50\tiny\(\pm\)3.12           & 6.54\tiny\(\pm\)3.52           & 6.48\tiny\(\pm\)2.08   & 6.50\tiny\(\pm\)2.01   & 6.97\tiny\(\pm\)2.97   & 7.04\tiny\(\pm\)3.05   & 6.86\tiny\(\pm\)3.02    & 7.06\tiny\(\pm\)2.88    & 5.97\tiny\(\pm\)3.03    & 6.24\tiny\(\pm\)2.75    \\
\bottomrule
\end{tabular}
}
\end{table}

{Our \textbf{cycle length alignment algorithm} aligns both the \emph{mean} and \emph{std} of average cycle lengths between small and large graphs. This is achieved by selectively increasing the cycle length of a subset of small graphs.} 
Formally, we present our algorithm in~\Cref{alg:align_cycle_length}. It has two hyperparameters: $n$, which controls the {maximum cycle length increments} one graph can get, and $\mathcal{R}$, which controls the portion of small graphs whose cycle lengths will be increased. We tune these hyperparameters in order to best align the cycle length of small graphs with that of large graphs. We tune $n$ over $[2,3,4,5,6,7]$ and $\mathcal{R}$ over $[1,2,3,4,5,6,7,8]$, and identify 
 $n=7, \mathcal{R}=8$ for BBBP, $n=6, \mathcal{R}=8$ for BACE, $n=6, \mathcal{R}=5$ for NCI1, $n=6, \mathcal{R}=5$ for NCI109 and $n=5, \mathcal{R}=3$ for PROTEINS. We present the average cycle length after alignment in \Cref{tab:cycle_statistics} and the discrepancy between small and large graphs is significantly reduced.

\begin{algorithm}
  \caption{Add One Cycle Length}
  \label{alg:helper_cycle_length}
   
  \begin{algorithmic}[1]
    \Procedure{\texttt{Add1\_Cycle\_Length}}{$\mathcal{G} \colon \text{input graph}$, $\mathcal{C} \colon \text{list of cycle basis}$}

    \For{ \text{each} $ \text{cycle } \mathcal{C}_i  $ }

    \State {$\text{edge } (v_1, v_2) \leftarrow \texttt{RandomChoice}(\mathcal{C}_i)$}
    \State {$\mathcal{G} \leftarrow \text{RemoveEdge}(v_1, v_2)$}
    \State {$v_{\text{new}} \leftarrow \text{ ReplicateNode}(\argmin_{\text{degree}}(\mathcal{C}_i))$}
    \State {$\mathcal{G} \leftarrow \text{AddNode}(v_{\text{new}})$}
    \State {$\mathcal{G} \leftarrow \text{AddEdge}(v_1, v_{\text{new}}) \ \text{AddEdge}(v_2, v_{\text{new}})$}
    \EndFor

    \State {\Return $\mathcal{G}$}
   
    \EndProcedure

  \end{algorithmic}

\end{algorithm}

\begin{algorithm}
  \caption{Align Cycle Length}
  \label{alg:align_cycle_length}
   
  \begin{algorithmic}[1]
    \Procedure{\texttt{AlignCycleLength}}{$\{\mathcal{G}_i\}_{i=1}^N \colon \text{a set of graphs}$, $\mathcal{R} \colon \text{skipping ratio}$, $n \colon \text{increased cycle length}, n \geq 1$}
    \State $\text{ResultList } \mathcal{M} \leftarrow []$  
    \For{ $ i = 0, 1,\dots, N-1 $ }
    \State{$\mathcal{G'} \leftarrow \mathcal{G}_i$}
    \If {$i \mod \mathcal{R} == 0$} 
    
    \For{$j = 0, 1, \dots, n-1$}
    \State{$\text{obtain Cycle Basis }\mathcal{C} \leftarrow \mathcal{G'}$} 
    
    \State{$\mathcal{G'} \leftarrow \texttt{Add1\_Cycle\_Length}(\mathcal{G'},\mathcal{C})$}
    
    \EndFor 
    \EndIf
    \State $\mathcal{M} \leftarrow \text{append }\mathcal{G'}$ 
    \EndFor
    \State {\Return $\mathcal{M}$}
    \EndProcedure
  \end{algorithmic}
\end{algorithm}

\subsection{Ablation Study on Randomly Adding Nodes}
\label{app: random_nodes}
One natural question arising from \Cref{subsection: breaking_cycles} is whether the cycle length alignment is necessary or it suffices to add nodes / edges randomly in order to reduce the spectrum discrepancy between small and large graphs.
{To answer this question, we conduct an ablation study on adding nodes and edges randomly to the graphs in the small dataset, \textit{matching the quantity} introduced during the cycle length alignment process. We then compare the changes in the relative spectral difference with those observed during the cycle length alignment procedure.}

{We present our results in \Cref{tab:eigen_diff_random}. We find that aligning cycle lengths consistently leads to smaller spectral differences between graphs of varying sizes compared to randomly adding nodes. These results further highlight the importance of cycle lengths for GNNs to achieve size generalizability.}

\begin{table*}[h]
\centering
\caption{Comparison between randomly adding nodes and aligning cycle lengths. Average Wasserstein distance of eigenvalue distributions between graphs of similar size and different sizes are computed. Relative difference is computed as in \Cref{tab:eigen_dist}. We use $\uparrow$ ($\downarrow$) to denote the increase (decrease) in the relative difference compared to not taking the corresponding action. Aligning cycle lengths results in a greater reduction of the relative spectral difference compared to randomly adding a node.}
\label{tab:eigen_diff_random}
\small 
\resizebox{\linewidth}{!}{
\centering
\begin{tabular}{@{}lccc|ccc@{}}
\toprule

                                       & \multicolumn{3}{|c}{\textbf{Randonly adding nodes}}                                                                  & \multicolumn{3}{|c}{\textbf{Aligning cycle lengths}}                                                                \\ \midrule
\multicolumn{1}{l|}{\textbf{Datasets}} & \multicolumn{1}{l}{\textbf{Different sizes}} & \multicolumn{1}{l}{\textbf{Similar size}} & \multicolumn{1}{l|}{\textbf{$\Delta$ relative difference}} & \multicolumn{1}{l}{\textbf{Different sizes}} & \multicolumn{1}{l}{\textbf{Similar size}} & \multicolumn{1}{l}{\textbf{$\Delta$ relative difference}} \\ \midrule
\multicolumn{1}{l|}{\textbf{BBBP}}     & 0.00557                                      & 0.00203                                    & $\downarrow$ 33\%                                             & 0.00565                                      & 0.00211                                    & $\downarrow$ 41\%                                            \\
\multicolumn{1}{l|}{\textbf{BACE}}     & 0.00406                                      & 0.00162                                    &  $\downarrow$ 26\%                                             & 0.00417                                      & 0.00176                                    & $\downarrow$ 41\%                                            \\
\multicolumn{1}{l|}{\textbf{NCI1}}     & 0.00559                                      & 0.00230                                    & $\downarrow$ 20\%                                             & 0.00566                                      & 0.00242                                    & $\downarrow$ 31\%                                            \\
\multicolumn{1}{l|}{\textbf{NCI109}}   & 0.00558                                      & 0.00232                                    & $\downarrow$ 22\%                                             & 0.00568                                      & 0.00245                                    & $\downarrow$ 31\%                                            \\
\multicolumn{1}{l|}{\textbf{PROTEINS}} & 0.00756                                      & 0.00280                                    & $\downarrow$ 24\%                                             & 0.00763                                      & 0.00302                                    & $\downarrow$ 41\%                                            \\ \bottomrule
\end{tabular}
}
\end{table*}

\section{Additional Plots on Cycle Breaking \& Cycle Length Alignment}

Following the same convention in \Cref{fig:cycle_purturbation_eigen}, 
\Cref{fig:additional_break_align} presents the pairwise graph distance measured by eigenvalue distributions after breaking cycles
and aligning cycles on other datasets.
\section{Additional Plots on Scaled Eigenvector}
\label{appen:scaled_eigenvector}
Similar to \Cref{fig:eigenvalue_dsitribution}, \Cref{fig:scaled_eigen} presents the pairwise graph distance measured using scaled eigenvectors across the five datasets. We observe that the dependence of the scaled eigenvectors on graph size is mild.

\begin{figure*}[!t]
  \centering
  \begin{subfigure}{0.8\textwidth}
    \centering
    \includegraphics[width=\textwidth]{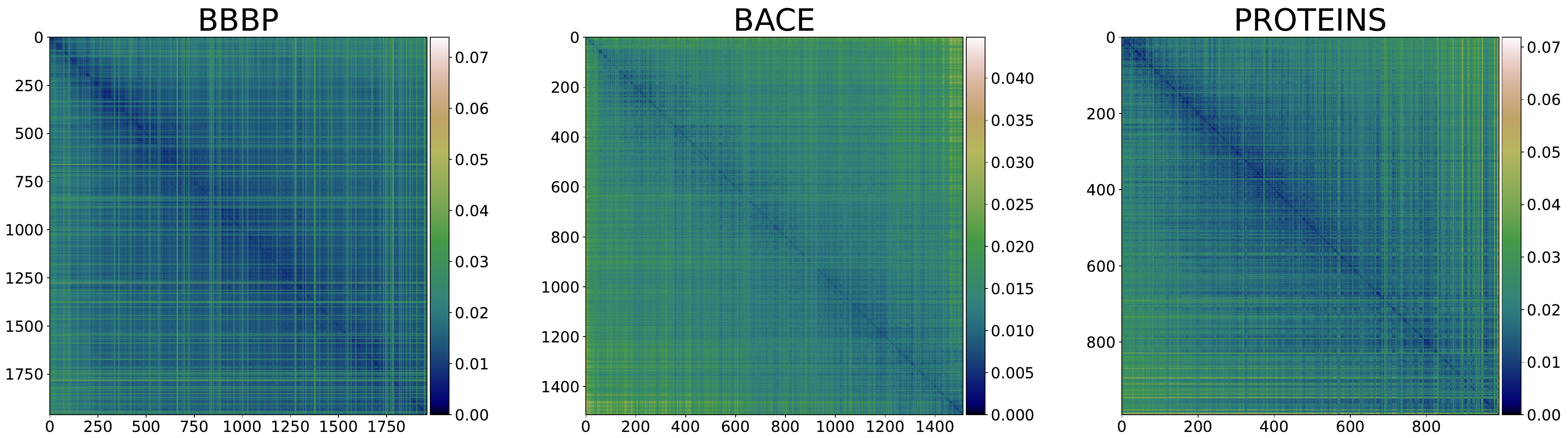}
  \end{subfigure}\\[5pt]
  
  \begin{subfigure}{0.7\textwidth}
    \centering
    \includegraphics[width=0.9\textwidth]{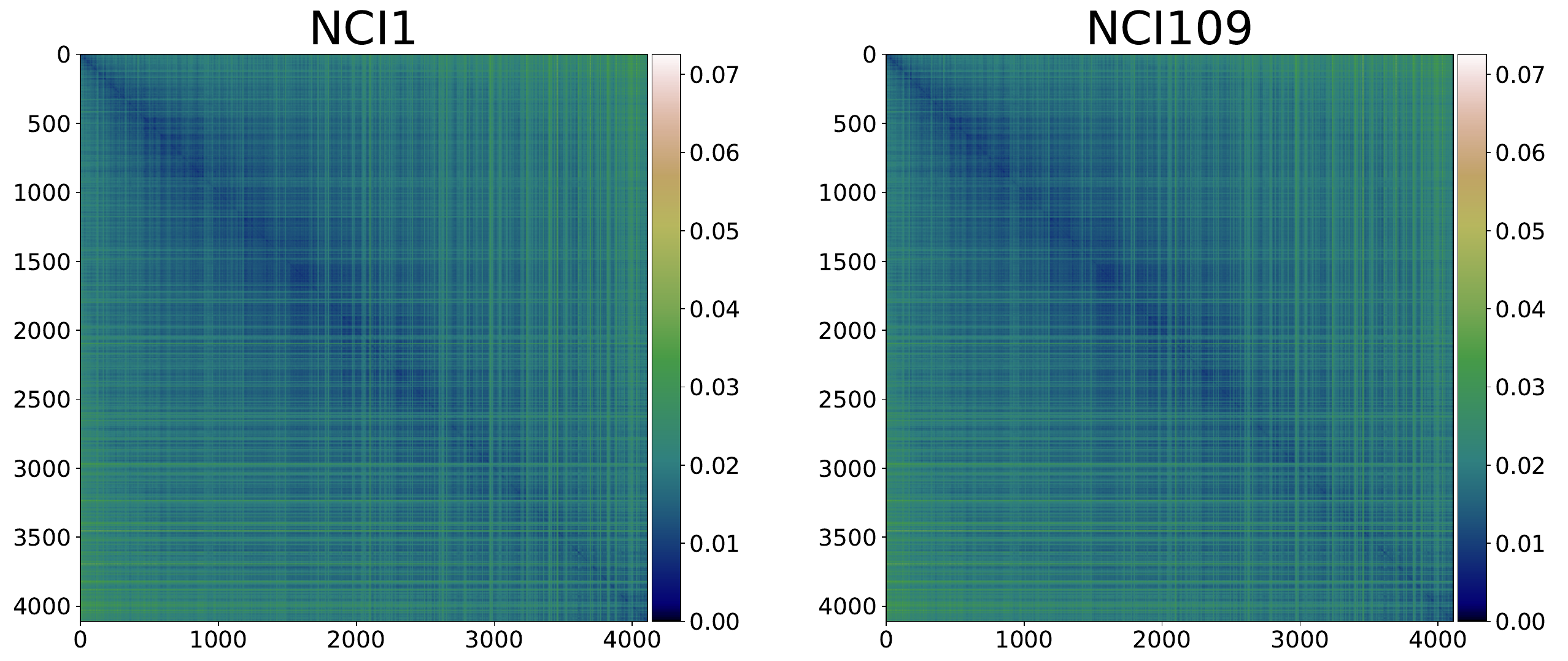}
  \end{subfigure}

  \caption{Pairwise graph distance measured by the Wasserstein distance of the scaled eigenvector distributions, evaluated using the same metrics as in~\Cref{fig:eigenvalue_dsitribution}. \textbf{The scaled eigenvector does not show a clear correlation with graph size}.}
  \label{fig:scaled_eigen}
  \vspace{-0.3cm}
\end{figure*}
\section{Additional Plots on Degree Distributions}
\label{appen:deg_distri_fromLocal}

\begin{figure*}[!t]
  \centering
  \begin{subfigure}{0.8\textwidth}
    \centering
    \includegraphics[width=\textwidth]{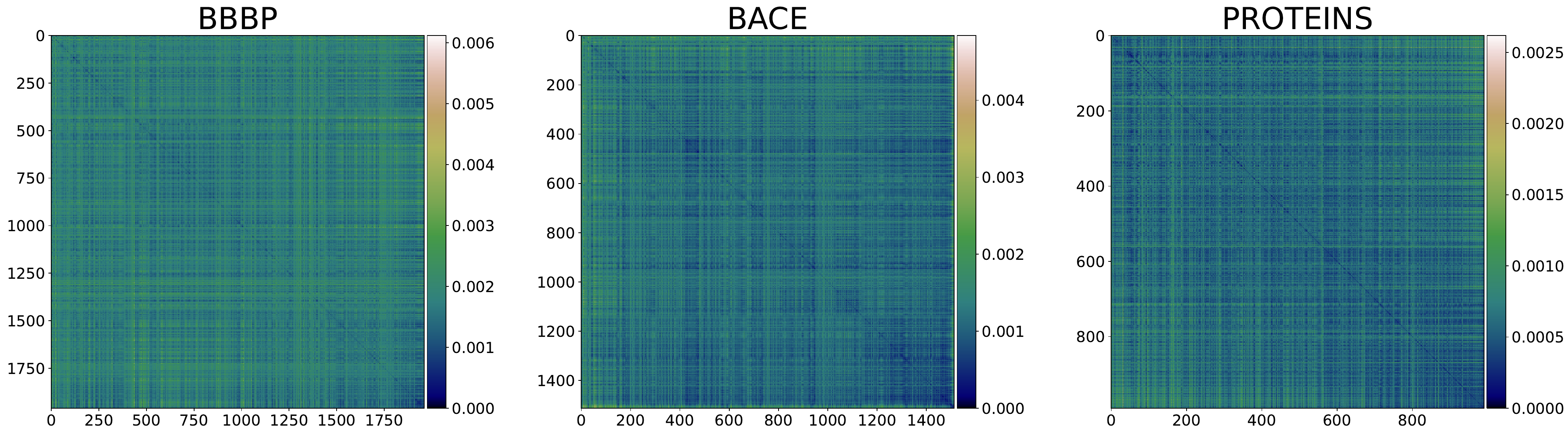}
    \label{fig:degree_distribution_first_three}
  \end{subfigure}\\[5pt]
  
  \begin{subfigure}{0.7\textwidth}
    \centering
    \includegraphics[width=0.9\textwidth]{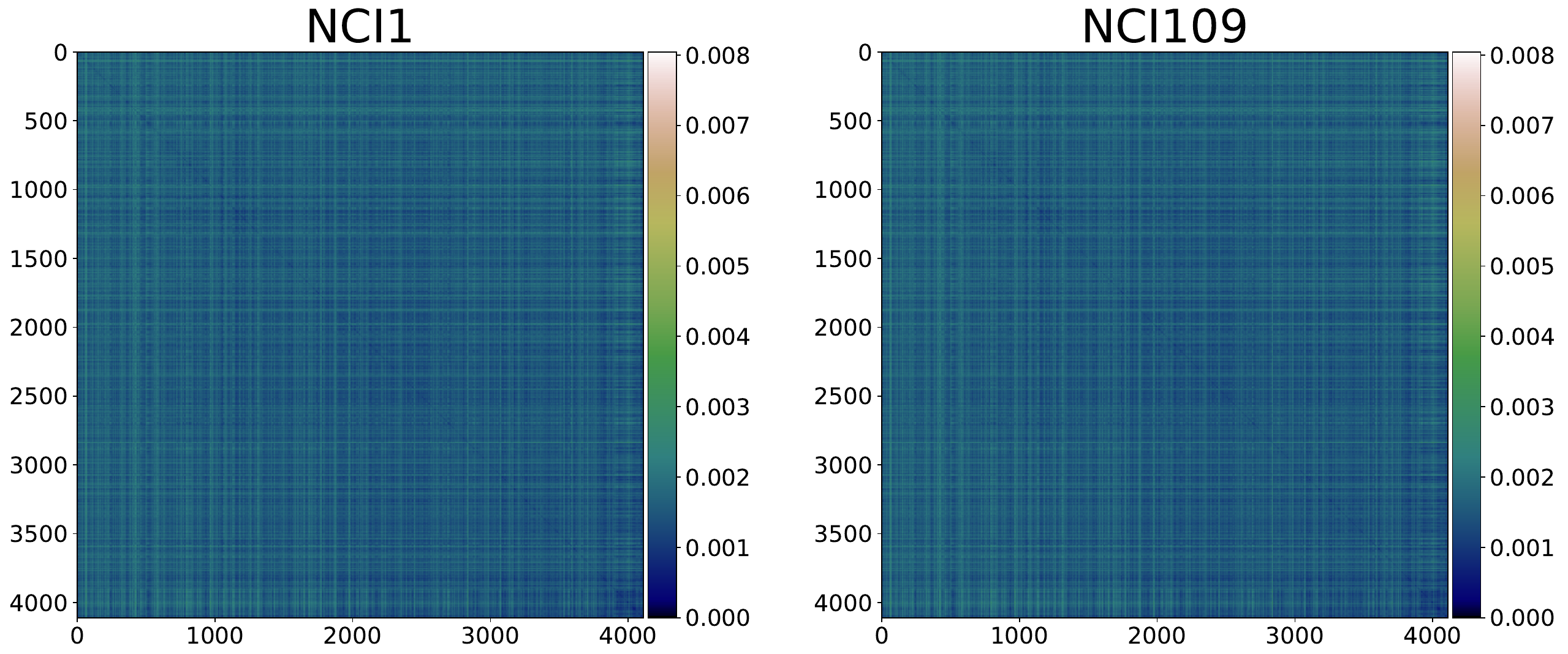}

  \label{fig:cycle_purturbation_eigen_second_three}
  \end{subfigure}

  \caption{\reb{Pairwise graph distance measured by the Wasserstein distance of degree distributions, following the same convention in~\Cref{fig:eigenvalue_dsitribution}. \textbf{Degree distribution does not show a clear correlation with graph size}.} }
  \label{fig:deg_distribution}
  \vspace{-0.3cm}
\end{figure*}

\Cref{fig:deg_distribution} 
reveal that the degree distribution does not exhibit a clear correlation with the graph size. These visualizations and tables adhere to the same convention as presented in Section 3.2. Unlike the assumption made in~\cite{yehudai2021local} that attributes changes in local degree patterns as the primary covariate shifts induced by size, our investigation on biological datasets suggests a different perspective.

\section{Limitations of d-Patterns in Capturing Cycle Information}
\label{appen:d_pattern}

\begin{figure*}[t]
    \centering
    \includegraphics[width=0.7\textwidth]{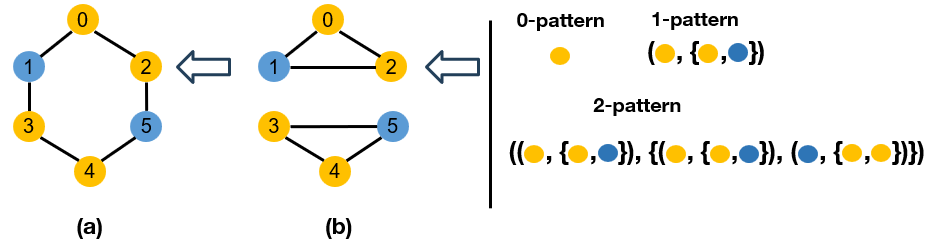}
    
  \caption{A simple counter-example of d-pattern. The d-pattern for each pair of nodes with the same id (number) in figures (a) and (b) is identical (proved in \Cref{claim:counter_example}). The figure follows the same convention as Figure 2 of \citet{yehudai2021local}, where the left side represents the two non-isomorphic graphs with 7 nodes and each color represents a different feature. The right side illustrates the 0, 1, and 2-pattern of the node with id 2.}
  
  \label{fig:d_pattern}
  \vspace{-0.3cm}
\end{figure*}

\begin{figure*}[t]
    \centering
    \includegraphics[width=0.35\textwidth]{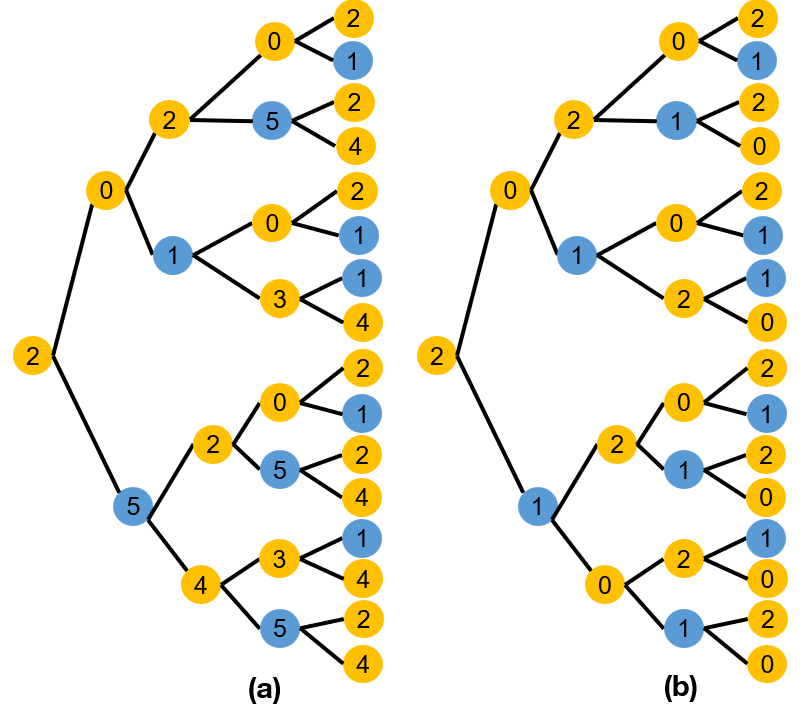}
    
  \caption{Illustration of recursive updates of node 2 in (a) and (b) from \Cref{fig:d_pattern}. In each hierarchy, the d-pattern is the same.}
  
  \label{fig:node_2_proof}
  \vspace{-0.3cm}
\end{figure*}

In this section, we show that the challenges identified in our findings cannot be addressed by the previous work \cite{yehudai2021local}.
\citet{yehudai2021local} introduce the concept of ``d-patterns" to characterize the information learned by GNNs for each node. It demonstrates that the presence of unseen d-patterns in large test graphs
results in significant test errors for GNNs. We first begin with their definition: 

\begin{definition}[d-pattern~\cite{yehudai2021local}]
    Let $C$ be a finite set of node
features, and let $G = (V, E)$ be a graph with node feature $c_v \in C$ for every node $v \in V$. We define the \textbf{d-pattern} of a node $v \in V$ for $d \geq 0$ recursively: For $d = 0$, the 0-pattern is $c_v$. For $d > 0$, the $d$-pattern of $v$ is $p=(p_v, \{(p_{i_1}, m_{i_1}), \ldots, (p_{i_l}, m_{i_l})\})$ iff node $v$ has $(d-1)$-pattern $p_v$ and for every $j \in \{1, \ldots, l\}$ the number of neighbors of $v$ with $(d-1)$-patterns $p_{i_j}$ is exactly $m_{p_{i_j}}$. Here, $i$ is the number of distinct neighboring $d-1$ patterns
of $v$.
\end{definition}

The notion of d-patterns highly resembles the idea of the 1-Weisfeiler-Lehman algorithm and suffers from the same limitations in differentiating cycles. In the following lemma, we show that the d-patterns for every node in the featureless cycle graphs of varying sizes are identical. 

\begin{lemma}
\label{lemma:d_pattern_cycle_graphs}
Consider a set of distinct cycle graphs $\{\mathcal{G}_n = (V_n, E_n)\}_{n=1}^k$, where each graph $\mathcal{G}_n$ in the set has a unique number of vertices $n$ with $n \geq 3$ and has no node features, then the d-pattern of each node is identical for all graphs. i.e. $\text{d-pattern}(v_i) = \text{d-pattern} (v_j) \: \forall \: v_i, v_j \in V_n, \forall \: n$.
\end{lemma}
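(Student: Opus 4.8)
The plan is to proceed by induction on $d$, exploiting the fact that every cycle graph is a connected $2$-regular graph with a single node feature (since the graphs are featureless, assign all nodes the same dummy feature, say $c_v = c_0$). The key structural observation is that in any cycle $\mathcal{G}_n$ with $n \geq 3$, the local neighborhood of every node looks identical: each node has exactly two neighbors, and — as long as we stay within radius less than $\lfloor n/2\rfloor$ — the induced rooted neighborhood is just a path of the appropriate depth. The $d$-pattern only ever records the multiset of $(d-1)$-patterns of neighbors together with their multiplicities, so it is blind to the ``global'' length of the cycle; it only sees that every node has two neighbors, each of which again has two neighbors, and so on.

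First I would set up the induction. For the base case $d = 0$: the $0$-pattern of any node $v$ is $c_v = c_0$, which is the same constant for every node in every $\mathcal{G}_n$. For the inductive step, suppose that for some $d \geq 0$ we have shown that all nodes in all graphs $\mathcal{G}_n$ ($n \geq 3$) share a common $d$-pattern, call it $p^{(d)}$. Now take any node $v \in V_n$. Its two neighbors each have $d$-pattern $p^{(d)}$ by the inductive hypothesis. Hence the number of distinct neighboring $d$-patterns of $v$ is $1$, and that unique pattern $p^{(d)}$ occurs with multiplicity $2$. Therefore the $(d+1)$-pattern of $v$ is $p^{(d+1)} = (p^{(d)}, \{(p^{(d)}, 2)\})$, which does not depend on $v$ or on $n$. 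This closes the induction and gives $\text{d-pattern}(v_i) = \text{d-pattern}(v_j)$ for all $v_i, v_j \in V_n$ and all $n \geq 3$, for every $d \geq 0$.

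I do not expect a serious obstacle here — the statement is essentially a formalization of the well-known fact that the $1$-WL test cannot distinguish cycles of different lengths. The one subtlety worth stating carefully is that the recursive definition of $d$-pattern, as written, refers only to neighbors' $(d-1)$-patterns and their counts, never to any notion of distance or to whether two neighbors coincide at larger radius; so even when $2d \geq n$ and the radius-$d$ ball ``wraps around'' the cycle, the pattern computation is unaffected, because it is a purely local recursion whose fixed point is reached after one step. I would make this explicit in a sentence so the reader sees that no case analysis on the relation between $d$ and $n$ is needed. The consequence I would then draw (presumably in the surrounding text or a corollary) is that GNNs relying on $d$-patterns cannot detect the difference in cycle lengths between small and large graphs, which is precisely the size-related signal identified in \S\ref{sec:findings}.
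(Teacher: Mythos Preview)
Your proposal is correct and follows essentially the same induction-on-$d$ argument as the paper: both start from the uniform base pattern (you at $d=0$ with the common dummy feature, the paper at $d=1$ with the common degree $2$) and use the inductive hypothesis plus $2$-regularity to show the $(d+1)$-pattern is $(p^{(d)},\{(p^{(d)},2)\})$ independently of the node and of $n$. Your remark that the recursion is purely local and hence no case split on $d$ versus $n$ is needed is a nice clarification the paper leaves implicit.
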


\begin{proof}
    We can prove this statement through induction on $d$. The 1-pattern corresponds to the node's degree. In any cycle graph, the degree of any node is the same, which equals 2.
    For induction, we assume the statement holds for $k$-pattern, \emph{i.e.} the $k$-pattern of every node is $\mathcal{K}$. Then we aim to prove the statement for the $(k+1)$-pattern. Note that every node has two neighboring nodes. By the recursive update rule of $d$-pattern, the $(k+1)$-pattern of any node is $(\mathcal{K}, \{(\mathcal{K}, \mathcal{K})\})$ and thus the $(k+1)$-pattern of every node is identical.
\end{proof}

The above lemma points out that d-patterns cannot distinguish nodes in cycles of varying sizes, suggesting that our study of cycles is orthogonal to the study in \cite{yehudai2021local}. 

Furthermore, suffering from the same limitation as 1-WL, d-patterns would produce identical embeddings for some pairs of non-isomorphic graphs containing cycles with node features, as can be seen in \Cref{fig:d_pattern} and proved in \Cref{claim:counter_example}. Moreover, d-pattern can fail to distinguish graphs that 1-WL cannot distinguish, with abundant existing counter-examples in existing literature. This essentially implies that d-patterns that are seen during training time, may comprise different graphs during test times and thus contradict previous conclusions. Those corner cases usually involve cycles. Thus, cycle information should be treated independently in the problem of GNN size generalization.

\begin{claim}
    \label{claim:counter_example}
    The d-pattern of each node with the same id in \Cref{fig:d_pattern} is identical.
\end{claim}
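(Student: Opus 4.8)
The plan is to verify directly that the recursive $d$-pattern computation produces the same labels for each identified pair of nodes in the two graphs of \Cref{fig:d_pattern}, by induction on $d$. First I would fix the node labelling: every node in graph (a) has a twin in graph (b) carrying the same id, and by construction of the figure these twins carry the same node feature $c_v$, so the claim holds for $d=0$ (the $0$-pattern is just $c_v$). For the inductive step, assume that for some $d\ge 0$ every pair of twins has the same $d$-pattern; I must then show the same for $d+1$. By the definition, the $(d{+}1)$-pattern of a node $v$ is determined by its own $d$-pattern together with the multiset $\{(p_{i_j},m_{i_j})\}$ of $d$-patterns of its neighbours (with multiplicities). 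Since twins have equal $d$-patterns by the inductive hypothesis, it suffices to check that twins have the same multiset of $d$-patterns among their neighbours. The natural way to get this is to exhibit, for each node $v$ in (a), a bijection between the neighbours of $v$ and the neighbours of its twin in (b) that maps each neighbour to a node with the same $d$-pattern — but crucially this bijection need \emph{not} respect ids, which is exactly why the two graphs can be non-isomorphic while sharing all $d$-patterns.

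The concrete work, which I would present compactly (perhaps aided by \Cref{fig:node_2_proof}), is to tabulate for each of the seven ids the feature and the list of neighbour-ids in both graphs, observe that the graphs differ only in how a few same-feature nodes are wired, and check that in every hierarchy level the local view (own $d$-pattern, multiset of neighbour $d$-patterns) coincides across twins. A cleaner route is to quotient the node set by the equivalence ``same feature and, inductively, same $d$-pattern'': I would argue that at level $d=1$ the partition of $V$ induced by $d$-patterns is the same in both graphs (same color classes, same degree sequence within each class — here all degrees relevant to the construction agree), and then show this partition is \emph{stable} in both graphs in the sense of color refinement, i.e. within each block, every node sees the same multiset of block-labels among its neighbours, and moreover this ``block adjacency'' data is identical for (a) and (b). Stability plus identical initial partition plus identical block-adjacency data implies the refinement terminates at the same partition with the same labels, which gives the claim for all $d$ at once.

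The main obstacle is purely the bookkeeping: one has to make sure the partition chosen really is stable in \emph{both} graphs simultaneously and that the two graphs present identical quotient (block-adjacency) multigraphs — this is the place where a subtle asymmetry between (a) and (b) could sneak in and break the argument, so it needs to be checked block-by-block rather than asserted. A secondary point to handle carefully is that the $d$-pattern definition records, for each node, only the number of neighbours of each $(d{-}1)$-pattern; so I must be careful to track multiplicities, not just the set of neighbour-patterns, when verifying stability. Once the common stable partition and common quotient data are pinned down, the induction on $d$ is immediate and the claim follows, establishing that d-patterns (like 1-WL) cannot separate these non-isomorphic cycle-containing graphs.
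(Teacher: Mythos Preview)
Your proposal is correct and, in its primary line (induction on $d$, base case from matching features, inductive step by verifying that each twin pair sees the same multiset of neighbour $d$-patterns), it is essentially the same argument the paper gives. The paper carries this out concretely: it checks $d=0,1$ by inspection, observes that nodes $0$ and $4$ have literally the same neighbour-ids in both graphs so those cases are automatic, and then treats node~$2$ as the representative nontrivial case, pointing to \Cref{fig:node_2_proof} to enumerate the recursive updates and see that the neighbour patterns coincide level by level.

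Where you go a bit further is your second, ``cleaner'' route via a common stable colouring: show that the $1$-pattern partition is identical in (a) and (b), that it is stable under refinement in both, and that the two quotient (block-adjacency) multigraphs agree; this yields the conclusion for all $d$ simultaneously. That is a more general and more principled packaging --- it is exactly the standard colour-refinement/1-WL equivalence argument --- and it buys you a proof that does not depend on singling out a ``representative'' node or on the figure. The paper's version is shorter but leans heavily on the picture; yours would be more self-contained, at the cost of the block-by-block bookkeeping you already flagged as the main thing to get right.
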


\begin{proof}
    We can prove this statement through induction on $d$. The statement can be easily verified for $d=0$ or $1$. Suppose the statement holds for $d\leq k$ and we want to show that it holds for the $(k+1)$-pattern. Note that the statement naturally holds for node $0, 4$ since their $(k+1)$-pattern is dependent on the $k$-pattern of the same neighbor nodes. WLOG it suffices to consider the $k+1$-pattern node 2 for the rest of the cases, where we can enumerate all the recursive update cases from \Cref{fig:node_2_proof}. It is clear to observe that in each hierarchy the dependent neighbor patterns are the same and thus the d-pattern is identical. Hence we prove this claim.  
\end{proof}

\begin{figure*}[t]
  \centering
  \begin{subfigure}{0.82\textwidth}
    \centering
    \includegraphics[width=\textwidth]{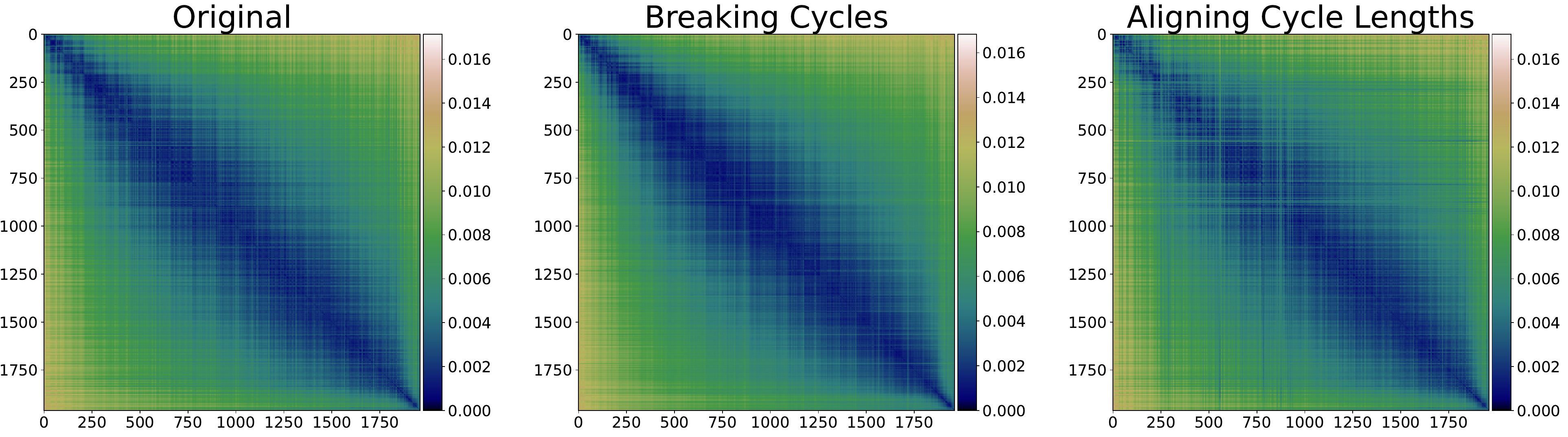}
    \caption*{BBBP}
  \end{subfigure}\\[5pt]
  
  \begin{subfigure}{0.82\textwidth}
    \centering
    \includegraphics[width=\textwidth]{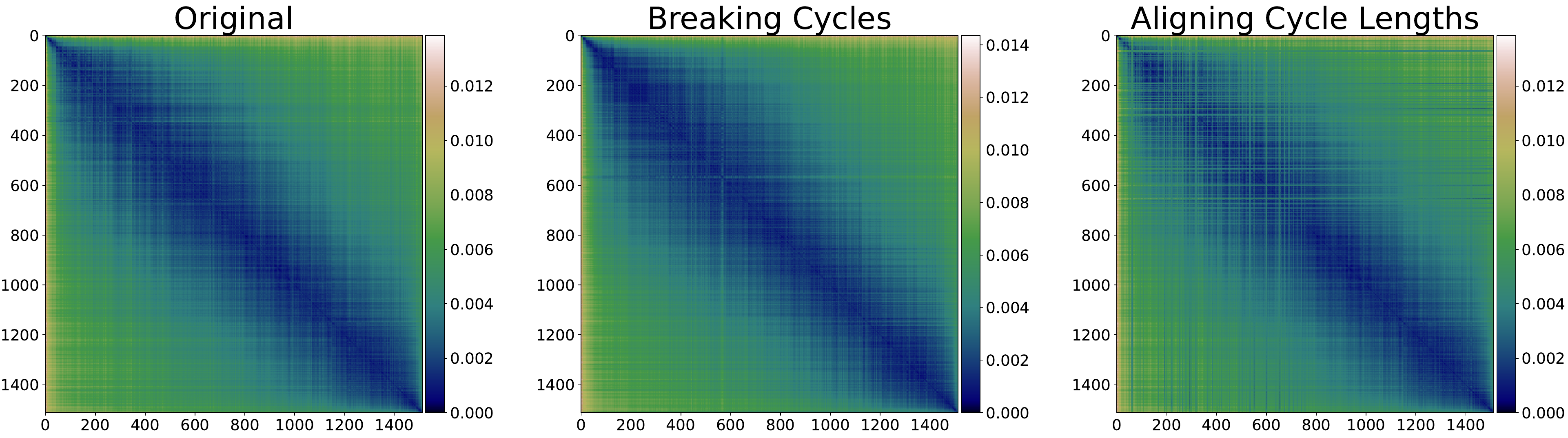}
    \caption*{BACE}
    \end{subfigure}\\[5pt]

    \begin{subfigure}{0.82\textwidth}
    \centering
    \includegraphics[width=\textwidth]{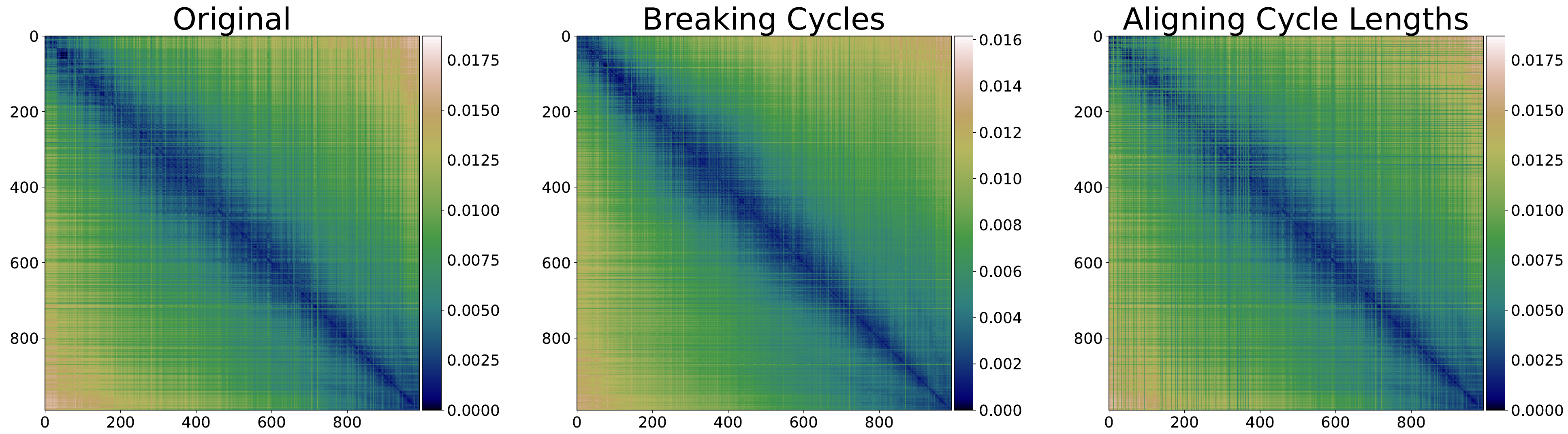}
    \caption*{PROTEINS}
    \end{subfigure}\\[5pt]
\begin{subfigure}{0.82\textwidth}
    \centering
    \includegraphics[width=\textwidth]{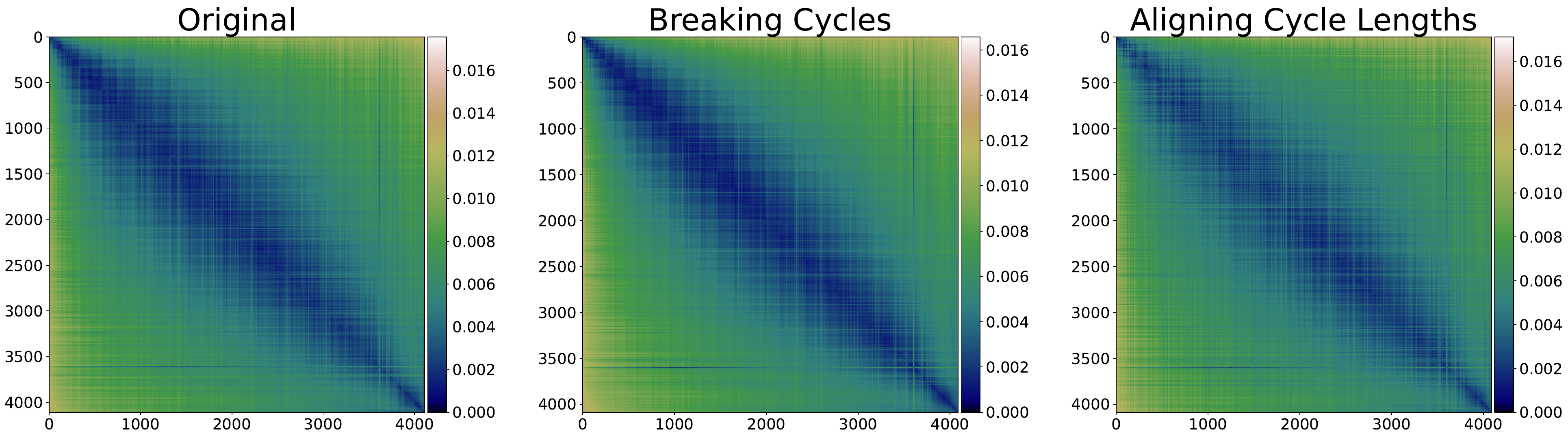}
    \caption*{NCI1}
    \end{subfigure}\\[5pt]
    \vspace{-0.6cm}
    \caption{Pairwise graph distance measured by the Wasserstein distance of eigenvalue distributions after breaking cycles and aligning cycle lengths on different datasets. Breaking cycles \textbf{amplifies the correlation} between eigenvalue distribution and graph size, while aligning cycle lengths  \textbf{reduces the correlation}.}
  \label{fig:additional_break_align}
  \vspace{-0.3cm}
\end{figure*}

\end{document}